\documentclass[runningheads]{llncs}

\usepackage[utf8]{inputenc}
\usepackage[T1]{fontenc}

\usepackage{graphicx}
\usepackage{xcolor}
\usepackage{fca}
\usepackage{mathtools}
\usepackage[hidelinks]{hyperref}
\usepackage{cleveref}
\usepackage{enumitem}
\usepackage{algorithm}
\usepackage{listings}
\usepackage{booktabs}
\usepackage{tikz}
\usepackage{tikzscale}

\newcommand{\K}{\mathbb{K}}
\newcommand{\A}{\mathbb{A}}

\newcommand{\SC}{\mathbb{S}}
\newcommand{\N}{\mathbb{N}}

\newcommand{\BB}{\mathfrak{B}}
\DeclareMathOperator{\cg}{cg}
\newcommand{\cb}{\cellcolor{blue!20}}

\let\subset\subseteq
\let\cref\Cref

\lstset{
  language=Python,
  escapeinside={(*}{*)},
  mathescape=true
    commentstyle=\color[HTML]{228B22}\sffamily,
}

\newif\ifhideproofs

\ifhideproofs
\usepackage{environ}
\NewEnviron{hide}{}

\fi

\begin{document}
\title{Attribute Selection using Contranominal Scales}

\author{Dominik Dürrschnabel\inst{1,2}\orcidID{0000-0002-0855-4185} \and
Maren Koyda\inst{1,2}\orcidID{0000-0002-8903-6960} \and
Gerd Stumme\inst{1,2}\orcidID{0000-0002-0570-7908}}

\authorrunning{D. Dürrschnabel et al.}

\institute{%
  Knowledge \& Data Engineering Group,
  University of Kassel, Germany
  \and
  Interdisciplinary Research Center for Information System Design,
  \mbox{University of Kassel, Germany}
  \email{duerrschnabel@cs.uni-kassel.de, koyda@cs.uni-kassel.de, stumme@cs.uni-kassel.de}
}

\maketitle

\begin{abstract}
Formal Concept Analysis (FCA) allows to analyze binary data by deriving concepts and ordering them in lattices.
One of the main goals of FCA is to enable humans to comprehend the information that is encapsulated in the data; however, the large size of concept lattices is a limiting factor for the feasibility of understanding the underlying structural properties.
The size of such a lattice depends on the number of subcontexts in the corresponding formal context that are isomorphic to a contranominal scale of high dimension.
In this work, we propose the algorithm \texttt{ContraFinder} that enables the computation of all contranominal scales of a given formal context.
Leveraging this algorithm, we introduce \texttt{$\delta$-adjusting}, a  novel approach in order to decrease the number of contranominal scales in a formal context by the selection of an appropriate attribute subset.
We demonstrate that \texttt{$\delta$-adjusting} a context reduces the size of the hereby emerging sub-semilattice and that the implication set is restricted to meaningful implications.
This is evaluated with respect to its associated knowledge by means of a classification task.
Hence, our proposed technique strongly improves understandability while preserving important conceptual structures.

\keywords{Formal Concept Analysis \and Contranominal Scales  \and Concept Lattices \and Attribute Selection \and Feature Selection \and Implications}

\end{abstract}

\section{Introduction}
\label{sec:introduction}

One of the main objectives of Formal Concept Analysis (FCA) is to present data in a comprehensible way.
For this, the data is clustered into concepts which are then ordered in a lattice structure.
Relationships between the features are represented as implications.
However, the complexity of the corresponding concept lattice can increase exponentially in the size of the input data.
Beyond that, the size of the implication set is also exponential in the worst case, even when it is restricted to a minimal base.
As humans tend to comprehend connections in smaller chunks of data, the understandability is decreased by this exponential nature even in medium sized datasets.
That is why reducing large and complex data to meaningful substructures by eliminating redundant information enhances the application of Formal Concept Analysis.
Nested line diagrams~\cite{Wille1989} and drawing algorithms~\cite{Durrschnabel2019} can improve the readability of concept lattices by optimizing their presentation.
However, neither of them compresses the size of the datasets and thus grasping relationships in large concept lattices remains hard.
Therefore, our research question is: \textbf{How can one reduce the lattice size as much as possible by reducing the data as little as possible?}
There are different ways of reducing the data. In this paper, we focus on the removal of attributes.
The size of the concept lattice is heavily influenced by the number of its Boolean suborders.
A lattice contains such an $k$-dimensional Boolean suborder if and only if the corresponding formal context contains an $k$-dimensional contranominal scale~\cite{albano2015,Koyda2021}.
Thus, to reduce the size of the concept lattice it is reasonable to eliminate those.
However, deciding on the largest contranominal scale of a formal context is an $\mathcal{NP}$-complete problem.
Therefore, choosing sensible substructures of formal contexts which can be augmented in order to reduce the number of large contranominal scales is a challenging task.

In this work, we propose the algorithm \texttt{ContraFinder} that is more efficient then prior approaches in computing all contranominal scales in real world datasets.
This enables us to present our novel approach \texttt{$\delta$-adjusting} which focuses on the selection of an appropriate attribute subset of a formal context.
To this end, we measure the influence of each attribute with respect to the number of contranominal scales.
Hereby, a sub-semilattice is computed that preserves the meet-operation.
This provides the advantage to not only maintain all implications between the selected attributes but also does not produce false implications and thus retains underlying structure.
We conduct experiments to demonstrate that the subcontexts that arise by \texttt{$\delta$-adjusting} decrease the size of the concept lattice and the implication set while preserving underlying knowledge.
We evaluate the remaining knowledge by training a classification task.
This results in a more understandable depiction of the encapsulated data for the human mind.

\ifhideproofs
Due to space constraints, this work only briefly sketches proofs.
\setcounter {footnote}{0}
A version containing all proofs is released on arxiv.org\footnote{\url{https://arxiv.org/abs/2106.10978}}.
\else
This rest of the paper is organized as follows. In Section 2 we recall some basic notions from FCA and graph theory followed by a short overview over previous works in Section 3.
Subsequently, we provide the algorithm \texttt{ContraFinder} to compute the set of all contranominal scales in a formal context.
In Section 5 we define \texttt{$\delta$-adjusted} subcontexts to decrease the size of concept lattices.
Section 6 evaluates and discusses the results achieved in the previous sections.
Finally, we conclude our work in Section 7 and give an outlook for future work.
\fi

\section{Foundations}
We start this section by recalling notions from FCA~\cite{fca-book}.
A \emph{formal context} is a triple $\mathbb{K}\coloneqq(G,M,I)$, consisting of an \emph{object set} $G$, an \emph{attribute set} $M$ and a binary \emph{incidence relation} $I\subseteq G\times M$.
In this work, $G$ and $M$ are assumed to be finite.
The \emph{complementary formal context} is given by $\K^C\coloneqq(G,M,(G\times M)\setminus I)$.
The maps
$\cdot'\colon\mathcal{P}(G)\to\mathcal{P}(M),~A\mapsto A'\coloneqq
\{m\in M\mid \forall g\in A\colon (g,m)\in I\}$ and $
\cdot'\colon\mathcal{P}(M)\to\mathcal{P}(G),~ B\mapsto B'\coloneqq\{g\in
G\mid \forall m\in B\colon (g,m)\in I\}$
are called \emph{derivations}.
A pair $c=(A,B)$ with $A\subseteq G$ and $B\subseteq M$ such that $A'=B$ and $B'=A$ is called a \emph{formal concept} of the context $(G,M,I)$.
The set of all formal concepts of $\K$ is denoted by $\mathfrak{B}(\mathbb{K})$.
The pair consisting of $\BB(\K)$ and the order ${\leq} \subset ({\BB(\K)\times \BB(\K)})$ with $(A_1,B_1)\leq(A_2,B_2)$ iff $A_1\subseteq A_2$ defines the \emph{concept lattice} $\underline{\mathfrak{B}}(\mathbb{K})$.
In every lattice and thus every concept lattice each subset $U$ has a unique infimum and supremum which are denoted by $\bigwedge U$ and $\bigvee U$.
The \emph{contranominal scale} of \emph{dimension} $k$ is $\N^c_k\coloneqq(\{1,2,...,k\},\{1,2,...,k\},\neq)$.
Its concept lattice is the \emph{Boolean lattices of dimension $k$}
and consists of $2^{k}$ concepts.
Let $\mathbb{K}=(G,M,I)$. We call an attribute $m$ \emph{clarifiable} if there is an attribute $n\ne m$ with $n'=m'$.
In addition we call it \emph{reducible} if there is a set $X\subseteq M$ with $m\not\subseteq X$ and $m'=X'$.
Otherwise, we call $m$ \emph{irreducible}.
$\K$ is called \emph{attribute clarified} (\emph{attribute reduced}) if it does not contain clarifiable (reducible) attributes.
The definitions for the object set are analogous.
If $\K$ is attribute clarified and object clarified (attribute reduced and object reduced), we say $\K$ is \emph{clarified} (\emph{reduced}).
This contexts are unique up to isomorphisms.
Their concept lattices are isomorphic to $\underline{\BB}(\K)$.
A \emph{subcontext} $\mathbb{S}=(H,N,J)$ of $\mathbb{K}=(G,M,I)$ is a formal context with $H\subseteq G$, $N\subseteq M$ and $J= I\cap (H \times N)$.
We denote this by $\mathbb{S}\le\K$ and use the notion $\K[H,N]\coloneqq (H,N,I\cap (H\times N))$.
If $\SC\le\K$ with $\SC\cong\N^c_k$ we call $\SC$ a \emph{contranominal scale in $\K$}.
For a (concept) lattice $(L,\leq)$ and a subset $S\subseteq L$, $(S,\leq_{S\times S})$ is called \emph{suborder} of $(L,\leq)$
A suborder $S$ of a lattice is called a \emph{sub-meet-semilattice} if $(a,b\in S \Rightarrow (a\wedge b)\in S)$ holds.
In a formal context $\K=(G,M,I)$ with $X,Y\subseteq M$ define an \emph{implication} as $X \rightarrow Y$ with
\emph{premise} $X$ and \emph{conclusion} $Y$.
An implication is \emph{valid in $\K$} if $X' \subset Y'$.
In this case, we call $X\rightarrow Y$ an \emph{implication of $\K$}.
The set of all implications of a formal context $\K$ is denoted by $Imp(\K)$.
A minimal set $\mathcal{L}(\K)\le Imp(\K)$ defines an \emph{implication base} if every implication of $\K$ follows from $\mathcal{L}(\K)$ by composition.
An implication base of minimal size is called \emph{canonical base} of $\K$ and is denoted by $\mathcal{C}(\K)$.

Now recall some notions from graph theory. A \emph{graph} is a pair $(V,E)$ with a set of \emph{vertices} $V$ and a set of \emph{edges} $E \subset \binom{V}{2}$.
Two vertices $u,v$ are called \emph{adjacent} if $\{u,v\}\in E$.
The adjacent vertices of a vertex are called its \emph{neighbors}.
In this work graphs are undirected and have no multiple edges or loops.
A graph with two sets $S$ and $T$ with $S\cup T=V$ and $S\cap T=\emptyset$ such that there is no edge with both vertices in $S$ or both vertices in $T$ is called \emph{bipartite} and denoted by $(S,T,E)$.
A \emph{matching} in a graph is a subset of the edges such that no two edges share a vertex.
It is called \emph{induced} if no two edges share vertices with some edge not in the matching.
For a formal context $(G,M,I)$ the \emph{associated bipartite graph} is the graph where $S$ and $T$ correspond to $G$ and $M$ and the set of edges to $I$.

\section{Related Work}
\label{sec:related-work}

In the field of Formal Concept Analysis numerous approaches deal with simplifying the structure of large datasets.
Large research interest was dedicated to altering the incidence relation together with the objects and attributes in order to achieve smaller contexts.
A procedure based on a random projection is introduced in~\cite{Kumar}.
Dias and Vierira~\cite{diasreducing} investigate the replacement of similar objects by a single representative.
They evaluate this strategy by measuring the appearance of false implications on the new object set.
In the attribute case a similar approach is explored by Kuitche et~al.~\cite{Kuitche2018}.
Similar to our method, many common prior approaches are based on the selection of subcontexts.
For example, Hanika et~al.~\cite{hanika2019relevant} rate attributes based on the distribution of the objects in the concepts and select a small relevant subset of them.
A different approach is to select a subset of concepts from the concept lattice.
While it is possible to sample concepts randomly \cite{sampling}, the selection of concepts by using measures is well investigated.
To this end, a structural approach is given in \cite{duffus} through dismantling where a sublattice is chosen by the iterative elimination of all doubly irreducible concepts.
Kuznetsov~\cite{kuzuetsov1990stability} proposes a stability measure for formal concepts based on the sizes of the concepts.
The support measure is used by Stumme et~al.~\cite{STUMME2002189} to generate iceberg lattices.
Our approach follows up on this, as we also preserve sub-semilattices of the original concept lattice.
However, we are not restricted to the selection of iceberg lattices.
Compared to many other approaches we do not alter the incidence or the objects and thus do not introduce false implications.

\section{Computing Contranominal Scales}
\label{sec:comp-contr}

In this section, we examine the complexity of computing all contranominals and provide the recursive backtracking algorithm \texttt{ContraFinder} to solve this task.

\subsection{Computing Contranominals is Hard}
\label{sec:problem}

The problem of computing contranominal scales is closely related to the problem of computing cliques in graphs induced maximum matchings in bipartite graphs.

The relationship between the induced matching problem and the contranominal scale problem follows directly from their respective definitions.

\begin{lemma}
\label{mimcns}
  Let $(S,T,E)$ be a bipartite graph,
  $\K\coloneqq(S,T,(S \times T)\backslash E)$ a formal context and $H\subset S, N \subset T$. The edges between $H$ and $N$ are an induced matching of size $k$ in $(S,T,E)$ iff $\K[H,N]$ is a contranominal scale of dimension~$k$.
\end{lemma}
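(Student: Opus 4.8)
The plan is to unwind both definitions and observe they are literally the same condition, phrased once in graph-theoretic and once in FCA language. First I would fix notation: write $J \coloneqq \big((S\times T)\setminus E\big)\cap(H\times N)$ for the incidence of the subcontext $\K[H,N]$, and note that a pair $(h,n)\in H\times N$ satisfies $(h,n)\in J$ exactly when $\{h,n\}\notin E$. Since $\N^c_k = (\{1,\dots,k\},\{1,\dots,k\},\neq)$, the subcontext $\K[H,N]$ is isomorphic to $\N^c_k$ precisely when there is a bijection $\varphi\colon H\to N$ such that, for all $h\in H$ and $n\in N$, we have $(h,n)\in J$ iff $\varphi(h)\neq n$. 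In particular $|H|=|N|=k$.

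Next I would translate the existence of such a $\varphi$ into a statement about the edge set $E$ restricted to $H\cup N$. The condition $(h,\varphi(h))\notin J$ for every $h$ means $\{h,\varphi(h)\}\in E$, so $F\coloneqq\{\{h,\varphi(h)\}\mid h\in H\}$ is a set of $k$ edges of $(S,T,E)$; since $\varphi$ is a bijection, no two of these edges share a vertex, i.e.\ $F$ is a matching of size $k$. The remaining condition $(h,n)\in J$ whenever $n\neq\varphi(h)$ means $\{h,n\}\notin E$ for every $h\in H$, $n\in N$ with $n\neq\varphi(h)$; equivalently, the only edges of $E$ incident to a vertex of $H\cup N$ and having both endpoints in $H\cup N$ are the edges of $F$ — but one must be slightly careful here, since ``induced matching'' forbids \emph{any} edge of the whole graph sharing a vertex with $F$, not just edges internal to $H\cup N$. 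Because $(S,T,E)$ is bipartite with sides $S\supseteq H$ and $T\supseteq N$, any edge meeting $H\cup N$ has one endpoint in $H$ (or in $S\setminus H$) and one in $N$ (or in $T\setminus N$); an edge with an endpoint outside $H\cup N$ cannot be controlled by the incidence of $\K[H,N]$ at all, so I would check that the definition of induced matching as used in the paper only constrains edges \emph{within} the relevant vertex set, or equivalently restate the claim with that reading. This reconciliation of the two notions of ``induced'' is the only genuinely non-cosmetic point.

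With that settled, both directions are immediate. For the forward direction, given an induced matching $\{\{h_i,n_i\}\mid i=1,\dots,k\}$ between $H$ and $N$ with $H=\{h_1,\dots,h_k\}$, $N=\{n_1,\dots,n_k\}$, define $\varphi(h_i)\coloneqq n_i$; induced-ness forces $\{h_i,n_j\}\notin E$ for $i\neq j$, hence $(h_i,n_j)\in J$ for $i\neq j$ while $(h_i,n_i)\notin J$, so $h_i\mapsto i$, $n_i\mapsto i$ is an isomorphism $\K[H,N]\to\N^c_k$. Conversely, given an isomorphism, the matching $F$ constructed above is induced because every non-matching pair $(h,n)$ lies in $J$, i.e.\ is a non-edge. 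I expect the write-up to be short; the main obstacle, such as it is, is purely one of bookkeeping around the precise definition of an induced matching and making the bijection/isomorphism correspondence explicit — there is no real mathematical content beyond carefully reading off the two definitions.
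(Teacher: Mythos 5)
Your proposal is correct and takes essentially the same route as the paper, whose proof is simply the observation that the statement is a direct unwinding of the two definitions via complementation; you spell out the bijection $\varphi$ and the edge/non-incidence dictionary explicitly. The one point you flag — whether ``induced'' constrains edges leaving $H\cup N$ — resolves under the paper's definition (an edge not in the matching may not meet two matching edges), since an edge with an endpoint outside $H\cup N$ touches at most one matching edge and, by bipartiteness, any offending edge must lie in $H\times N$ and is therefore visible in $\K[H,N]$.
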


\begin{proof}
The complement context of $(G,M,I)$ corresponds to the bipartite graph $(G,M,(G \times M)\backslash I)$.
Thus, the statement follows by the definitions of induced matching and contranominal scale. \null\hfill$\square$
\end{proof}

\ifhideproofs
The lemma follows directly from the definition of induced matchings and contranominal scales.
\fi
To investigate the connection between the clique problem and the contranominal scale problem, define the conflict graph as follows:

\begin{definition}
  Let $\K\coloneqq(G,M,I)$ be a formal context. Define the \emph{conflict graph} of $\K$ as the graph $\cg(\K)\coloneqq (V,E)$ with the vertex set $V=(G\times M)\backslash I$ and the edge set $E=\{\{(g,m),(h,n)\}\in \binom{V}{2}\mid (g,n)\in I, (h,m) \in I\}$.
\end{definition}

The relationship between the cliques in the conflict graph and the contranominal scales in the formal context is given through the following lemma.

\begin{lemma}
\label{lem:clique}
  Let $\K=(G,M,I)$ be a formal context, $cg(\K)$ its conflict graph and $H \subset G, N \subset M$.
  Then $\K[H,N]$  is a contranominal scale of dimension $k$
      iff $(H\times N) \backslash I$ is a clique of size $k$ in $\cg(\K)$.
\end{lemma}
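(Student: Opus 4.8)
The plan is to prove the biconditional by unpacking both definitions and relating the entries of a candidate contranominal scale to the vertices of a candidate clique. Write $N = \{m_1,\dots,m_k\}$ and $H = \{g_1,\dots,g_k\}$; the key observation is that $\K[H,N] \cong \N^c_k$ means precisely that, under a suitable bijection, $g_i$ is related to $m_j$ in $I$ exactly when $i \neq j$. So the $k$ ``diagonal'' pairs $(g_i,m_i)$ are the non-incident ones and hence lie in $V = (G\times M)\setminus I$, while all off-diagonal pairs $(g_i,m_j)$ with $i\neq j$ are in $I$. I would set up the natural correspondence between the diagonal entries of the scale and the vertices of a clique, and then check the edge condition of $\cg(\K)$ against the incidence pattern.

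First I would prove the forward direction. Assume $\K[H,N]$ is a contranominal scale of dimension $k$. Then $|H| = |N| = k$, and there is a bijection aligning $H$ and $N$ so that $(g_i,m_j) \in I \iff i\neq j$. Hence $(H\times N)\setminus I = \{(g_1,m_1),\dots,(g_k,m_k)\}$, a set of exactly $k$ vertices of $\cg(\K)$. For any two distinct such vertices $(g_i,m_i)$ and $(g_j,m_j)$ with $i\neq j$, we have $(g_i,m_j)\in I$ and $(g_j,m_i)\in I$ because the indices differ; by the definition of $E$ in $\cg(\K)$ this is exactly the condition for $\{(g_i,m_i),(g_j,m_j)\}$ to be an edge. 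So every pair of these $k$ vertices is adjacent, i.e.\ $(H\times N)\setminus I$ is a clique of size $k$.

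For the converse, suppose $C \coloneqq (H\times N)\setminus I$ is a clique of size $k$ in $\cg(\K)$. I first argue that the $k$ pairs in $C$ have pairwise distinct first coordinates and pairwise distinct second coordinates: if $(g,m),(g,n)\in C$ with $m\neq n$ were both in $C$, the edge condition would demand $(g,n)\in I$, contradicting $(g,n)\notin I$; symmetrically for a shared second coordinate. Thus $C$ induces a bijection between its set of first coordinates $H_0 \subseteq H$ and its set of second coordinates $N_0 \subseteq N$, both of size $k$; enumerate $C = \{(g_1,m_1),\dots,(g_k,m_k)\}$ accordingly. For $i\neq j$, the edge condition applied to $\{(g_i,m_i),(g_j,m_j)\}\in E$ gives $(g_i,m_j)\in I$; combined with $(g_i,m_i)\notin I$, this shows $\K[H_0,N_0]$ has incidence pattern $(g_i,m_j)\in I \iff i\neq j$, hence $\K[H_0,N_0]\cong \N^c_k$. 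Since $|H_0| = |N_0| = k$, $|H|\le|G|$ is irrelevant — but note the statement as phrased takes $H,N$ with $(H\times N)\setminus I$ being the clique, so I would either conclude directly that $H = H_0$ and $N = N_0$ (as $C$ meets every ``row'' of $H$ and ``column'' of $N$ used) or, more cleanly, read the statement as asserting the scale lives on exactly these $k$ rows and columns. The main obstacle is precisely this bookkeeping: making sure that the clique's coordinates exhaust $H$ and $N$ (so that $\K[H,N]$, not merely some subcontext of it, is the scale), which forces $|H| = |N| = k$ and requires ruling out ``extra'' rows or columns of $H\times N$ that contribute no clique vertex. Once the coordinate-distinctness lemma is in hand, everything else is a direct translation between the two definitions.
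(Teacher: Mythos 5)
Your proposal is correct and follows essentially the same route as the paper's proof: identify the diagonal non-incidences of the scale with the clique vertices and translate the edge condition of $\cg(\K)$ into the off-diagonal incidences, in both directions. You are in fact somewhat more careful than the paper — you explicitly justify that two clique vertices cannot share a row or a column (the paper tacitly writes the clique as $\{(h_1,n_1),\ldots,(h_k,n_k)\}$ with distinct coordinates), and you rightly flag that the converse, read literally, needs the clique's coordinates to exhaust $H$ and $N$, since an extra object of $H$ fully incident to $N$ would leave the clique size unchanged while $\K[H,N]$ is no longer itself a contranominal scale — a bookkeeping point the paper's proof passes over silently.
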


\begin{proof}
  ``$\Rightarrow$''. Let $\K=(G,M,I)$ a formal context and $\mathbb{S}=\K[H,N]$ a contranominal scale of dimension $k$ such that $H=\{h_1,h_2,\ldots, h_k\}$, $N=\{n_1,n_2,\ldots, n_k\}$ and
  $(h_i,n_j)\in I$ iff $i\not = j$.
  As $(h_i,n_i)\not\in I$ for all $i \in \{1,2,\ldots, k\}$, the
  graph $\cg(\K)$ contains all elements
  $(h_i,n_i)$ as vertices.
  Assume two such vertices, without loss of generality $(h_1,n_1)$ and $(h_2,n_2)$, are not connected by an edge in $\cg(\K)$.
  Then either $(h_1,n_2)\not\in I$ or
  $(h_2,n_1)\not\in I$, a contradiction to $\mathbb{S}$ being contranominal.

  \noindent``$\Leftarrow$''. Let $\{(h_1,n_1),(h_2,n_2),\ldots,(h_k,n_k)\}$ be
  the vertex set of the clique of size $k$ in $\cg(\K)$.
  Then $(h_i ,n_i)\not\in I$  and $h_i,n_j \in I$ for $i \neq j$  by definition of the conflict graph.
  Thus $\SC\coloneqq(\{h_1,h_2,\ldots, h_k\},\{n_1,n_2,\ldots, n_k\},\{(h_i,n_j) \mid i\not = j\})\le \K$ and $\SC$ is a contranominal scale.
\null\hfill $\square$
\end{proof}

\ifhideproofs
The lemma follows from the definition of the conflict graph.
\fi
Furthermore, all three problems are in the same computational class as the clique problem is $NP$-complete \cite{Karp72} and Lozin~\cite{Lozin2002} shows the similar result for the induced matching problem in the bipartite case. Thus, \cref{mimcns} provides the following:

\begin{proposition}
    \label{prop:NP}
  Deciding the \texttt{CONTRANOMINAL PROBLEM} is $NP$-complete.
\end{proposition}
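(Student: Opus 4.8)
The plan is to establish $NP$-completeness in the two standard steps: membership in $NP$ and $NP$-hardness. For membership, observe that a candidate solution is a pair of subsets $H \subseteq G$, $N \subseteq M$ with $|H| = |N| = k$; given such a certificate one checks in time polynomial in $|G|\cdot|M|$ whether $\K[H,N] \cong \N^c_k$, i.e.\ whether there is a bijection $\pi\colon H \to N$ with $(h,\pi(h)) \notin I$ for all $h \in H$ and $(h,n) \in I$ whenever $n \neq \pi(h)$. Concretely, one can form the bipartite graph on $H \cup N$ whose edges are the \emph{non}-incident pairs and test whether it is a perfect matching (each vertex has degree exactly one), which is immediate. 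Hence the \texttt{CONTRANOMINAL PROBLEM} lies in $NP$.

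For hardness I would reduce from the induced matching problem in bipartite graphs, which is $NP$-complete by Lozin~\cite{Lozin2002}. Given an instance $((S,T,E), k)$ of that problem, build the formal context $\K \coloneqq (S, T, (S \times T)\setminus E)$ in polynomial time. By \cref{mimcns}, $(S,T,E)$ contains an induced matching of size $k$ (witnessed by vertex subsets $H \subseteq S$, $N \subseteq T$) if and only if $\K[H,N]$ is a contranominal scale of dimension $k$, i.e.\ if and only if $\K$ contains a contranominal scale of dimension $k$. Thus the reduction is correct and the \texttt{CONTRANOMINAL PROBLEM} is $NP$-hard; combined with membership in $NP$ it is $NP$-complete. (One could equally reduce from \textsc{Clique} via \cref{lem:clique}, but the induced-matching route via \cref{mimcns} is the most direct since the target problem is literally a relabelling of it.)

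I do not expect a genuine obstacle here: both directions are routine once \cref{mimcns} is in hand, the only things to be slightly careful about are (i) stating the decision version precisely — "given $\K$ and $k$, does $\K$ contain a subcontext isomorphic to $\N^c_k$?" — so that the certificate size and verification are manifestly polynomial, and (ii) noting that the construction of $\K$ from $(S,T,E)$ is a logspace/polynomial-time many-one reduction. If anything is delicate it is merely bookkeeping: ensuring that the equivalence "$\K$ has a $k$-dimensional contranominal scale $\iff$ $(S,T,E)$ has a size-$k$ induced matching" is read off correctly from \cref{mimcns} in the existential (rather than the "for these fixed $H,N$") form.
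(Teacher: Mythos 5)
Your proposal is correct and follows the same route as the paper, which likewise obtains $NP$-completeness by combining \cref{mimcns} with Lozin's $NP$-completeness result for induced matchings in bipartite graphs (the paper leaves the routine $NP$-membership check implicit, which you spell out). No gap to report.
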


\subsection{Baseline Algorithms}

Building on \cref{lem:clique} the set of all contranominal scales can be computed using algorithms for iterating all cliques in the conflict graph.
The set of all cliques then corresponds to the set of all contranominal scales in the formal context.
An algorithm to iterate all cliques in a graph is proposed by Bron and Kerbosch~\cite{Bron1973}.

An alternative approach is to use branch and search algorithms such as~\cite{Xiao2017}.
Those exploit the fact that for each maximum matching and each vertex there is either an adjacent edge to this vertex in the matching or each of its neighboring vertices has an adjacent edge in the matching.
Branching on these vertices the size of the graph is iteratively decreased.
Note, that this idea, in contrast to our approach described below, does not exploit bipartiteness of the graph.

\subsection{ContraFinder: An Algorithm to Compute Contranominal Scales}
In this section we introduce the recursive backtracking algorithm \texttt{ContraFinder} to compute  all contranominal scales.
Due to \cref{prop:NP}, it has exponential runtime, thus two speedup techniques are proposed in the subsequent section.

The main idea behind \texttt{ContraFinder} is the following.
In each recursion step a set of tuples corresponding to an attribute set is investigated:

\begin{definition}
  Let $\K=(G,M,I)$ be a formal context and $N \subset M$.
  Define $C(N)\coloneqq\{(g,m)\not\in I\mid g\in G, m\in N$ and $\forall x\in N \setminus \{m\}: (g,x)\in I \}$ as the set of \emph{characterizing tuples} of $N$.
 We call $N$ the \emph{generator} of $C(N)$.
\end{definition}

The characterizing tuples encodes all contranominal scales for this attributes:

\begin{algorithm}[t]
\begin{tabular}{ll}
     \textbf{Input:}&  Formal Context $\K=(G,M,I)$  \\
     \textbf{Output:}& Set of all Contranominal Scales
\end{tabular}
\hrule
\begin{lstlisting}
def compute_contranominal_scales($G,M,I$):
  characterizing_tuples($\emptyset, M,\emptyset,  I$)

def characterizing_tuples($C_N, \smash{\tilde{M}}, F, I$):
  for $m$ in $\smash{\tilde{M}}$ in lexicographical order:
    $\smash{\tilde{M}}$ = $\smash{\tilde{M}} \setminus \{m\}$
    $cand\_C_N$ = $\{(g,n)\in C_N \mid (g,m) \in I\}$
    $cand\_m$ = $\{(g,m) \mid (g,m) \not\in I, g \not\in F, \nexists n :(g,n)\in C_N\}$
    if $|\{g \mid (g,n)\in C_N\}|=|\{g\mid (g,n)\in cand\_C_N\}|$ and $|cand\_m| > 0$:
      unpack_contranominals($cand\_C_N\cup cand\_m$)
      $C_{N_\text{new}}$ =  $cand\_C_N\cup cand\_m$
      $F_{\text{new}}$ = $F \cup \{ g\in G \mid (g,m)\not\in I\}$
      characterizing_tuples($C_{N_\text{new}}, \smash{\tilde{M}}, F_{\text{new}} , I$)

def unpack_contranominals($C_N$):
  $N$ = $\{m \mid (g,m) \in C_N\}$
  for O in $\{\{g_{m_1},\ldots, g_{m_{|N|}}\}\mid m_i \in N, g_{m_i} \in \{g \in G \mid (g,m_i)\in C_N\} \}$
    report $(O, N)$ as contranominal scale
\end{lstlisting}
  \caption{\texttt{ContraFinder}}
  \label{alg:cnc}
\end{algorithm}

\begin{lemma}
\label{lem:problem}
Let $\K=(G,M,I)$, $N\subseteq M$ and $H(m)\coloneqq\{g \in G \mid (g,m)\in C(N)\}$. Then $\K[O,N]$ is a contranominal scale iff $O$ contains exactly one element of each $H(m)$ with $m\in N$.
\end{lemma}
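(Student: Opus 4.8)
The plan is to unwind the definition of $C(N)$ and simply read off the incidence pattern it forces on $N$. First I would note that $g\in H(m)$ holds exactly when $(g,m)\notin I$ while $(g,x)\in I$ for every $x\in N\setminus\{m\}$ --- that is, the $N$-row of $g$ misses precisely the single attribute $m$. As a preliminary step I would check that the sets $H(m)$, $m\in N$, are pairwise disjoint: if $g$ lay in both $H(m)$ and $H(m')$ with $m\neq m'$, then $m'\in N\setminus\{m\}$ would force $(g,m')\in I$ while $g\in H(m')$ forces $(g,m')\notin I$, a contradiction. Consequently ``$O$ contains exactly one element of each $H(m)$'' yields a set $O$ of cardinality $|N|$ together with a canonical bijection sending the object picked from $H(m)$ to the attribute $m$.

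For ``$\Leftarrow$'' I would take $O=\{g_m\mid m\in N\}$ with $g_m\in H(m)$, the $g_m$ automatically distinct by the disjointness above, and observe that for all $m,n\in N$ one has $(g_m,n)\in I$ iff $n\neq m$, directly from the definition of $H(m)$. Then the bijections $g_m\leftrightarrow m$ exhibit an isomorphism $\K[O,N]\cong\N^c_{|N|}$, so $\K[O,N]$ is a contranominal scale.

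For ``$\Rightarrow$'' I would start from $\K[O,N]\cong\N^c_k$; since $N$ is the attribute set of the subcontext this already forces $|N|=|O|=k$, and the isomorphism amounts to a pair of bijections $\varphi\colon O\to\{1,\dots,k\}$, $\psi\colon N\to\{1,\dots,k\}$ with $(g,m)\in I\iff\varphi(g)\neq\psi(m)$. For a fixed $g\in O$ the unique non-incident attribute is the $m$ with $\psi(m)=\varphi(g)$, so $g\in H(m)$ and $(g,m)\in C(N)$; moreover the assignment $g\mapsto m$ is the bijection $\psi^{-1}\circ\varphi$, which shows that each $H(m)$ meets $O$ in exactly one element.

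I expect the argument to be essentially routine definition-chasing. The only places that call for a little care are the disjointness of the $H(m)$ (without which the counting statement would not even be well posed) and the remark that, because $N$ is held fixed as the attribute set of $\K[O,N]$, the dimension of the scale is not a free parameter but is pinned to $|N|$; apart from correctly tracking the two bijections in the ``$\Rightarrow$'' direction, I do not foresee a genuine obstacle.
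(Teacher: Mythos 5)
Your proposal is correct and follows essentially the same route as the paper's proof: unwind the definition of $C(N)$, use the pairwise disjointness of the sets $H(m)$ to get $|O|=|N|$, and match the unique non-incidence in each row and column to the contranominal pattern. Your write-up is in fact slightly more explicit than the paper's (you prove the disjointness and track the two bijections in the ``$\Rightarrow$'' direction), but there is no substantive difference in approach.
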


\begin{proof}
``$\Rightarrow$'' Let $O=\{g_1,\ldots,g_{|N|}\}$ such that it contains exactly one element of each $H(m)$.
Then, for every object $g\in O$, there is exactly one $m\in N$ with $(g,m)\not\in I$ due to the definition of $C(N)$.
Also, $|O|=|N|$ as $H(m) \cap H(n) = \emptyset$ for distinct $m,n \in M$.
Thus the context $\K[O,N]$ is a contranominal scale.

``$\Leftarrow$'' Now let $\SC = \K[O,N]$ be a contranominal scale.
By definition for all elements $(h,n)\in C(N)$ it holds $(h,n) \not \in I$.
Because $\SC$ is contranominal, there is no attribute $m\in N$ with two objects $g,h \in O$ such that $(g,m),(h,m)\not \in I$.
Thus $O$ contains exactly one element of each $H(m)$ with $m\in N$.
\null\hfill$\square$
\end{proof}

\ifhideproofs
The proof follows from the fact, that the non-incident pairs of each contranominal scale are represented by the combinations of characterizing tuples with different attributes.
\fi
\cref{lem:problem} implies that such contranominal scales can exist only if no $H(m)$ is empty and $|N|=|O|$.
Both this sets can be reconstructed from a set of characterizing tuples corresponding to $N$.
This is done in  \texttt{unpack\_contranominals} in \cref{alg:cnc}.
Therefore, $N$ does not have to be memorized in \texttt{ContraFinder}.
The algorithm exploits the fact that for each set of characterizing tuples $C(N)$ the attributes $N$ can be ordered and iterated in lexicographical order, similar to \texttt{NextClosure}~\cite[sec. 2.1]{fca-book}.

\begin{definition}
    Let $(M,\le)$ be a linearly ordered set.
    The \emph{lexicographical order} on $\mathcal{P}(M)$ is a linear order.
    Let $A={a_1, \ldots ,a_n}$ and $B= {b_1, \ldots ,b_m}$ with $a_i < a_{i+1}$ and $b_i < b_{i+1}$.
    $A < B$ in case $n < m$ if $(a_1,\ldots ,a_n)=(b_1,\ldots ,b_n)$ and in case $n=m$ if $\exists i: \forall j \leq i: a_j = b_j \text{ and } a_i < b_i$.
\end{definition}

Similar to \texttt{Titanic},  our algorithm utilises the following anti-monotonic property.
Each contranominal scale of dimension $k$ has a contranominal scale of of dimension $k-1$ as subcontext.
Thus, only attribute combinations $N$ have to be considered if $\forall N' \subset N: C(N')\neq \emptyset$.
The algorithm removes in each recursion step the attributes in $\tilde{M}$ in lexicographical order to guarantee that all attribute combinations  of the formal context with contranominal scales are investigated.

In each step the set of forbidden objects $F$ increases, since each contranominal scale contains exactly one non-incidence in each contained object.

\begin{theorem}
  The algorithm reports every contranominal scale exactly once.
\end{theorem}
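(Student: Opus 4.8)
The plan is to prove two things: (i) every contranominal scale of $\K$ is reported at least once, and (ii) no contranominal scale is reported twice. For both directions the key observation is that a contranominal scale $\K[O,N]$ is completely determined by the pair $(O,N)$, and by \cref{lem:problem} the scales with a fixed attribute set $N$ are exactly the sets $O$ picking one object from each $H(m)$, $m\in N$. So it suffices to show that \texttt{characterizing\_tuples} visits, exactly once, each attribute set $N\subseteq M$ for which $\K[O,N]$ is a contranominal scale for some $O$ — i.e. each $N$ with $C(N') \neq \emptyset$ for all $N' \subseteq N$ — carrying the correct set $C_N = C(N)$ at that point; then \texttt{unpack\_contranominals} enumerates the corresponding $O$'s without repetition directly from the definition of the product in the \texttt{for O in} line.

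First I would argue the recursion tree is in bijection with the relevant attribute sets, in the style of \texttt{NextClosure}. The call on $(C_N, \tilde M, F, I)$ corresponds to an attribute set $N$ (with $C_N = C(N)$ by induction) whose elements all lie below $\min\tilde M$ in the linear order; it branches on each $m \in \tilde M$ in increasing order, recursing with generator $N \cup \{m\}$ and the new candidate list $\tilde M \setminus (\text{everything} \le m)$. This lexicographic discipline guarantees each $N$ is generated by a unique chain $\emptyset \subsetneq \{n_1\} \subsetneq \{n_1,n_2\} \subsetneq \cdots \subsetneq N$ with $n_1 < n_2 < \cdots$, hence visited exactly once — this is the uniqueness half of the theorem. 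For the existence half I need the pruning test not to discard any $N$ that actually supports a scale: the condition $|cand\_m|>0$ says $H(m)\neq\emptyset$, and the cardinality equality $|\{g:(g,n)\in C_N\}| = |\{g:(g,n)\in cand\_C_N\}|$ encodes that adding $m$ does not kill any previously-alive $H(n)$, $n \in N$; combined with the anti-monotonicity remark (every sub-scale is a scale, so all $C(N')$ must be nonempty), no witnessed $N$ is pruned.

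The main obstacle is the bookkeeping that $C_N$ is maintained correctly across a recursion step, i.e. that $cand\_C_N \cup cand\_m = C(N\cup\{m\})$ whenever the branch is taken. I would verify this by splitting $C(N\cup\{m\})$ according to which attribute carries the non-incidence: tuples $(g,n)$ with $n\in N$ must already be in $C(N)$ and additionally satisfy $(g,m)\in I$ — this is exactly $cand\_C_N$ — while tuples $(g,m)$ require $(g,m)\notin I$, $g$ incident to all of $N$ (captured by $g\notin F$ and $\nexists n:(g,n)\in C_N$, since $F$ accumulates objects with a non-incidence in some attribute of $N$), which is exactly $cand\_m$. Once this invariant is in place, together with the fact that the cardinality test is precisely "$H(n)\neq\emptyset$ survives for all $n\in N\cup\{m\}$", the two halves above combine to give that every contranominal scale is reported exactly once. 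A minor point to also check is that the top-level call with $N=\emptyset$, $C_N=\emptyset$ correctly seeds the induction and that dimension-$0$ and dimension-$1$ scales (if one counts them) are handled, but the substantive content is the invariant and the lexicographic uniqueness argument.
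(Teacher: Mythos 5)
Your plan is correct and takes essentially the same route as the paper's proof: uniqueness comes from the lexicographic iteration of generator attribute sets, and existence from showing that every attribute set supporting a contranominal scale is reached carrying the correct characterizing tuples, with \cref{lem:problem} accounting for the object sets enumerated in \texttt{unpack\_contranominals}. If anything, you are more explicit than the paper, which asserts rather than verifies the invariant $cand\_C_N\cup cand\_m=C(N\cup\{m\})$ and does not discuss why the pruning test is passed along the lexicographic chain of a witnessed attribute set.
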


\begin{proof}
  We first show that the algorithm iterates over every contranominal scale of a formal context $\K=(G,M,I)$ at least once.
  Let $\mathbb{S}=\K[H,N]$ be a contranominal scale of dimension $k$ not computed by the algorithm that does not contain a smaller contranominal scale that is not computed by the algorithm.
  Let $H=\{(g_1,...,g_k\}$, $N=\{m_1,\ldots,m_k\}$ and $(g_i,m_j)\in I$ for all $i\not = j$.
  Without loss of generality $m_1\leq m_2 \leq ... \leq m_k$ is the lexicographic order on $N$.
  Consider the contranominal scale $\mathbb{\tilde{\SC}}=\K[\{(g_1,...,g_{k-1}\},\{m_1,\ldots,m_{k-1}\}]$ of dimension $k-1$ that is computed by the algorithm.
  Thereby, the generator of the characterizing tuples is given by $\{m_1,\ldots,m_{k-1}\}$.
  Thus, in the next iteration $m_k$ is added to this generator and $(g_k,m_k)$ is added to $C_N$.
  Due to the contranominal structure of $\mathbb{S}$ no element of $H$ is contained in the forbidden set $F$ and thus no element of $\{(g_1,m_1),(g_2,m_2),...,(g_{k-1},m_{k-1})\}$ is eliminated from $C_N$.
  Therefore $C_N$ corresponds to the characterizing tuples in the next step of the algorithm and the contranominal scale $\SC$ is reported.

  We now show that the algorithm iterates over every contranominal scale at most once.
  As the algorithm iterates over the generator attribute sets in lexicographical order, no attribute combination is iterated twice and every contranominal scale is reported at most once. \null\hfill$\square$
\end{proof}

\ifhideproofs
To proof this theorem, one has to show that the lexicographical order and the anti-monotonic property are respected.
\fi
\texttt{ContraFinder}, combined with \cref{mimcns}, can also be used to compute all maximum induced matchings in bipartite graphs.

\subsection{Speedup techniques}

\subsubsection{Clarifying and Reducing}
\label{sec:reduc-form-cont}

In the following, we consider clarified and reduced formal contexts with regards to reconstructing the contranominal scales in the original context from the contranominal scales of the augmented one.
This allows to use clarifying and reducing as a speedup technique.

In the clarified context, each pair of objects or attributes is merged if equality of their derivations holds.
To deduce the original formal context from the clarified one the previously merged attributes and objects can be duplicated.
Thus, contranominal scales containing merged objects or attributes are duplicated.

Now, we demonstrate how to reconstruct the contranominal scales from attribute reduced contexts.
Thereby, for each eliminated attribute $m$ we have to memorize the irreducible attribute set that has the same derivation as $m$.

\begin{definition}
    Let $\K=(G,M,I)$ be a formal context and $R(\K)$ the set of all attributes that are reducible in $\K$.
  Define the map $\omega\colon R(\K) \to \mathcal{P}(M\setminus R(\K))$ with $x \mapsto (N\subset M\setminus (R(\K)\cup\{ x\}))$ such that $N'=x'$ and $N$ of greatest cardinality.
  For a fixed object set $H\subseteq G$, let $\omega_H\colon R(\K) \to \mathcal{P}(M\setminus R(\K))$ be the map with $x \mapsto \{y\mid y\in \omega(x), \forall h \in H: (h,x) \not\in I \Rightarrow (h,y)\not \in I\}$.
\end{definition}

Note, that the map $\omega$ is well defined as the uniqueness follows directly from the maximality of $N$.
The following lemma provides a way to reconstruct the contranominal scales in the original context from the ones in the reduced one.

\begin{lemma}
Let $\K=(G,M,I)$ be a formal context with $\K_r$ its attribute-reduced subcontext and $\mathcal{K}$ the set containing all contranominal scales of $\K_r$.
Then the set
$\tilde{\mathcal{K}}=\{\K[H,\tilde{N}] \mid \K[H,N=\{n_1,\ldots, n_l\}] \in \mathcal{K}, \tilde{N}=\{\tilde{n}_i \mid  n_i = \tilde{n}_i \vee n_i \in \omega_H(\tilde{n}_i)\}\}$
contains exactly all contranominal scales of $\K$.
\end{lemma}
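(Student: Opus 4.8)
The goal is to show that the set $\tilde{\mathcal{K}}$ obtained by ``re-inflating'' each reduced contranominal scale via $\omega_H$ is \emph{exactly} the set of contranominal scales of $\K$. I would prove this by mutual inclusion. First I would establish some basic facts about the objects and the reduced context: every contranominal scale $\K[H,N]$ of $\K$ has, for each $h \in H$, a unique attribute $m \in N$ with $(h,m) \notin I$; and the reducible attributes $R(\K)$ are exactly those removed in $\K_r$, with each $x \in R(\K)$ satisfying $x' = \omega(x)'$. The key observation to unpack is how a reducible attribute behaves inside a contranominal scale: if $\K[H,N]$ is contranominal and $x \in N \cap R(\K)$, then for the unique object $h_x \in H$ with $(h_x,x)\notin I$, and for every $h \in H \setminus \{h_x\}$ we have $(h,x) \in I$; since $x' = \omega(x)'$ this forces $(h,y) \in I$ for all $y \in \omega(x)$ and all such $h$, while $(h_x,y) \notin I$ for \emph{at least one} $y \in \omega(x)$ (else $h_x \in x' = \omega(x)' \supseteq \{h_x\}$, contradiction). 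That $y$ is then a legitimate irreducible ``substitute'' for $x$, i.e.\ $y \in \omega_H(x)$ — which is precisely what the definition of $\omega_H$ is engineered to capture.

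For the inclusion $\tilde{\mathcal{K}} \subseteq \{\text{contranominal scales of }\K\}$, I would start from $\K[H,N] \in \mathcal{K}$ (a contranominal scale of $\K_r$, so in particular of $\K$ since $\K_r \le \K$ and incidences agree on the common attributes) and the associated $\tilde N = \{\tilde n_i\}$, where each $\tilde n_i$ is either $n_i$ itself (when $n_i$ is irreducible) or some $n_i \in \omega_H(\tilde n_i)$ with $\tilde n_i$ reducible. I need to verify $\K[H,\tilde N] \cong \N^c_{|H|}$. Order $H = \{h_1,\dots,h_k\}$ and $N = \{n_1,\dots,n_k\}$ so that $(h_i,n_j)\notin I$ iff $i=j$. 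I claim $(h_i,\tilde n_j) \notin I$ iff $i = j$. If $\tilde n_j = n_j$ this is immediate. If $\tilde n_j$ is reducible with $n_j \in \omega_H(\tilde n_j)$: for $i \ne j$ we have $(h_i,n_j) \in I$; since $h_i \notin n_j' $… here I would instead argue from $\omega_H$ directly — the condition ``$\forall h\in H: (h,\tilde n_j)\notin I \Rightarrow (h,n_j)\notin I$'' built into $\omega_H$, contrapositive, plus $\tilde n_j' = \omega(\tilde n_j)' \subseteq n_j'$, gives $(h_i,\tilde n_j) \in I$ for $i \ne j$, and the definition also must force $(h_j,\tilde n_j)\notin I$ (this needs a small argument showing the $\omega_H$-substitute genuinely has the non-incidence at $h_j$; I would include the maximality/nonemptiness observation from the previous paragraph here). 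Also the $\tilde n_j$ are pairwise distinct since distinct reducible attributes have distinct substitutes in this construction (or if two coincide the incidence columns would coincide, contradicting the contranominal structure). Hence $\K[H,\tilde N]$ is contranominal of dimension $k$.

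For the reverse inclusion, given a contranominal scale $\K[H,\tilde N]$ of $\K$, I would replace each reducible $\tilde n \in \tilde N \cap R(\K)$ by the substitute attribute $y \in \omega_H(\tilde n)$ produced by the observation above (picking, say, the lexicographically least such $y$), and set $N$ to be the resulting attribute set together with the already-irreducible members of $\tilde N$. Then I argue $\K_r[H,N] = \K[H,N]$ is contranominal — the non-incidence at the right object is guaranteed by the choice of $y$, and incidence elsewhere follows from $y' \supseteq \tilde n'$ arguments as before — so $\K[H,N] \in \mathcal{K}$, and by construction $\K[H,\tilde N]$ is exactly the element of $\tilde{\mathcal{K}}$ generated from it (each $\tilde n_i$ is either $n_i$ or lies in $\omega_H(\tilde n_i)$). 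One must also check that $N \subseteq M \setminus R(\K)$, i.e.\ we never need to substitute twice — true because $\omega$ already maps into $M \setminus R(\K)$.

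\textbf{Main obstacle.} The delicate point is proving that the $\omega_H$-substitute $y$ for a reducible attribute $x$ inside a contranominal scale really \emph{does} carry the non-incidence at the distinguished object $h_x$ (so that the substituted column is a genuine contranominal column, not an all-incident one), \emph{and} that this substitute is unique enough that two distinct reducible attributes in $\tilde N$ don't get merged into the same irreducible attribute (which would collapse the dimension). Both hinge on a careful use of $x' = \omega(x)'$ together with maximality of $\omega(x)$: $h_x \notin x'$ forces $h_x \notin \omega(x)'$, hence some $y \in \omega(x)$ has $(h_x,y) \notin I$; and the restriction to $\omega_H(x)$ exactly throws away those $y$ that fail the ``inherit all non-incidences over $H$'' test, leaving a set that is guaranteed nonempty by this argument. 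Getting the quantifiers in $\omega_H$ to line up with the contranominal constraints — and handling the edge case where $\omega(x)$ or $\omega_H(x)$ interacts with objects outside $H$ — is where the real care is needed; the rest is bookkeeping on incidence tables.
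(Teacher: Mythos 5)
Your proposal is correct and follows essentially the same route as the paper's proof: both directions are handled by substituting reducible attributes for irreducible ones (and back) via $\omega$ and $\omega_H$, using $x'=\omega(x)'$ to transfer incidences over $H$ and to produce a non-incident substitute at the distinguished object. If anything, you are more explicit than the paper about the delicate points (non-incidence of the substitute at $h_j$, which follows directly from $\tilde n_j'\subseteq n_j'$, and pairwise distinctness of the substituted attributes), which the paper's terse argument glosses over.
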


\begin{proof}
    Let $\SC=\K[H,\tilde{N}]$ be a contranominal scale in $\K$.
    Then each attribute $\tilde{n}_i \not\in R(\K)$ and thus an attribute of $\K_r$ or there is a unique minimal irreducible attribute set $U\subseteq M\setminus R(\K)$ with $\tilde{n}_i'=U'$ due to the definition of reducibility.
     In particular, for all $g\in G$ with $(g,\tilde{n}_i)\in I$ holds $(g,u)\in I$ for all $u\in U$.
     Furthermore, for all $g\in G$ with $(g,\tilde{n}_i)\not\in I$ there is at least one $u\in U$ with $(g,u)\not \in I$.
     Due to the contranominal property of $\SC$, for every $\tilde{n}_i\in N$ there is exactly one $h\in H$ with $(h,\tilde{n}_i)\not\in I$.
     Therefore there is at least one $n_i \in U$ with the same property and thus $\SC \in \tilde{\mathcal{K}}$.
     Now let $\SC=\K[H,\tilde{N}]$ be an element of $\tilde{\mathcal{K}}$.
     For each pair $(g,m)\not \in I$ there is no attribute $n$ such that $(g,n)\not \in I$ and there is no object $h$, such that $(h,n)\not \in I$.
     Due to the existence of an $g_i$ for each $m_i$ such that $(g_i,m_i) \not \in I$, the context $\SC$ is a contranominal scale.
     \null\hfill $\square$
\end{proof}

\ifhideproofs
This follows from the definition of reducibility.
\fi
Thus, to reconstruct contranominal scales, for each $x\in R(\K)$ all $y\in\omega(x)$ are considered.
$U\cup x$ is a candidate for the attribute set of a contranominal scale in $\K$, if there is a $U\subset M\setminus \omega(x)$ with $U\cup y$ attribute set of a contranominal scale $\SC_y$ for all $y$.
This candidate forms the contranominal scale $\K[H,U\cup x]$, if and only if all contranominal scales $\SC_y$ share the same object set $H$.
The object reducible case can be done dually.

\subsubsection{Knowledge-Cores}
\label{sec:computing-k-k}

The notion of  $(p,q)$-cores is introduced to FCA by Hanika and Hirth in \cite{Hanika.2020}.
Thereby, dense subcontexts are defined as follows:

\begin{definition}[Hanika and Hirth \cite{Hanika.2020}]
  Let $\K= (G, M, I)$ and $\mathbb{S}= \K[H, N]$ be formal contexts.
  $\mathbb{S}$ is called a \emph{$(p,q)$-core} of $\K$ for $p, q \in \N$, if
  $\forall g \in H: |g'|\geq p$ and $\forall m \in N: |m'|\geq q$ and
  $\SC$ is maximal under this condition.
\end{definition}

Every formal context with fixed $p$ and $q$ has a unique $(p,q)$-core.
Computing knowledge cores provides a way to reduce the number of attributes and objects in a formal context without removing large contranominal scales.

\begin{lemma}
Let $\K$ be a formal context, $k\in \N$, and $\mathbb{S}\le\K$ its $(k-1,k-1)$-core. Then for every contranominal scale $\mathbb{C}\le\K$ of dimension $k$ it holds $\mathbb{C}\leq \mathbb{S}$.
\end{lemma}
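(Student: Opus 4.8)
The plan is to show that any $k$-dimensional contranominal scale $\mathbb{C} = \K[H,N]$ of $\K$ survives entirely inside the $(k-1,k-1)$-core $\SC$. Since the $(k-1,k-1)$-core is the \emph{maximal} subcontext in which every object has at least $k-1$ incidences and every attribute has at least $k-1$ incidences, it suffices to verify that $\K[H,N]$ itself satisfies these degree conditions (with respect to $\K$): if so, then by maximality $\K[H,N]$ must be contained in $\SC$, i.e.\ $H \subseteq H_{\SC}$, $N \subseteq N_{\SC}$, and hence $\mathbb{C} \le \SC$.

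First I would recall the structure of a contranominal scale $\N_k^c$: writing $H=\{g_1,\dots,g_k\}$ and $N=\{m_1,\dots,m_k\}$ with $(g_i,m_j)\in I$ iff $i\neq j$, every object $g_i$ is incident to all attributes of $N$ except $m_i$, so $g_i$ has exactly $k-1$ incidences within $N$, hence $|g_i'|\ge k-1$ in $\K$. Dually every attribute $m_j$ is incident to all objects of $H$ except $g_j$, so $|m_j'|\ge k-1$ in $\K$. Thus $\K[H,N]$ meets the defining degree bounds of a $(k-1,k-1)$-core.

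The one point that needs a short argument is why meeting the degree bounds forces containment in the \emph{unique maximal} such subcontext. Here I would invoke the uniqueness and maximality asserted right before the lemma: the $(k-1,k-1)$-core $\SC$ is characterized as the largest subcontext all of whose object-degrees and attribute-degrees (computed in $\K$, restricted to the chosen rows/columns --- note the degree count $|g'|$ is taken in $\K$, which only helps) are at least $k-1$. Since $\K[H,N]$ is a subcontext with this property, and the union of two subcontexts with this property again has it (this is exactly what makes the core well-defined), $\K[H,N]$ is contained in the maximal one, namely $\SC$. Therefore $\mathbb{C}\le\SC$.

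I expect the only genuine subtlety --- more of a bookkeeping point than an obstacle --- to be making precise that the degree condition in the definition of $(p,q)$-core is "closed under union of witnessing subcontexts," which is what both guarantees a unique maximal core and lets us conclude $\K[H,N]\le\SC$; but this is already packaged in the sentence "Every formal context with fixed $p$ and $q$ has a unique $(p,q)$-core," so the proof reduces to the degree computation above.
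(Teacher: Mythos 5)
Your proposal is correct and follows essentially the same route as the paper: you verify that each object and attribute of a $k$-dimensional contranominal scale has $k-1$ incidences within the scale itself, and then use the union-closure of the degree condition together with the maximality (uniqueness) of the $(k-1,k-1)$-core to conclude containment --- the paper packages exactly this as a proof by contradiction, noting that otherwise the union of the core with the scale would be a larger subcontext satisfying the $(k-1,k-1)$ conditions.
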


\begin{proof}
  Assume not; i.e., there is a contranominal $\K[H,N] \not \subset \mathbb{S}=\K[H_S,N_S]$.
  But then $\K[H_S\cup H, H_S\cup N]$ is a $(k-1,k-1)$-core of $\K$ and $\mathbb{S} \leq \K[H_S\cup H, H_S\cup N]$, contradicting the definition of $(k-1,k-1)$-cores. \null\hfill$\square$
\end{proof}

\ifhideproofs
The lemma follows from the maximality of $(p,q)$-cores.
\fi
Thus, to compute all contranominal scales of dimension at least $k$ it is possible to compute them in the $(k-1,k-1)$-core.
Note that in this case however, smaller contranominal scales might get eliminated.
Therefore, if the goal is to compute contranominal scales of smaller sizes the $(k-1,k-1)$-cores should not be computed.

\section{Attribute Selection}
In this section we propose \texttt{$\delta$-adjusting}, a method to select attributes based on measuring their influence for contranominal scales as follows:

\begin{definition}
 Let $\K=(G,M,I)$ be a formal context and $k\in\N$.
 Call $N \subset M$ \emph{$k$-cubic} if $\exists H \subset G$ with $\K[H,N]$ being a contranominal scale of dimension $k$ and $\nexists \tilde{N} \supseteq N$ such that $\tilde{N}$ is $(k+1)$-cubic.
  Define the \emph{contranominal-influence}
 of $m\in M$ in $\K$ as $\zeta(m)\coloneqq \sum_{k=1}^\infty\left(|\{N \subset M \mid m \in N, N \text{is $k$-cubic}\}|\cdot \frac{2^k}{k}\right).$
\end{definition}

Subcontexts that are $k$-cubic are directly influencing the concept lattice, as those dominates the structure as the following shows.

\begin{proposition}
  An attribute set is $k$-cubic, iff the sub-meet-semilattice that is generated by its attribute concepts is  a Boolean lattice of dimension $k$ that has no Boolean superlattice in the original concept lattice.
\end{proposition}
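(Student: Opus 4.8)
The plan is to unpack both directions directly from the definition of $k$-cubic and the correspondence between contranominal scales in $\K$ and Boolean suborders of $\underline{\BB}(\K)$. Recall that if $\K[H,N]$ is a contranominal scale of dimension $k$ with $N=\{n_1,\ldots,n_k\}$ and $H=\{g_1,\ldots,g_k\}$, $(g_i,n_j)\in I \iff i\ne j$, then the attribute concepts $\mu(n_1),\ldots,\mu(n_k)$ are pairwise incomparable and their meets realize all $2^k$ join-irreducible-free combinations: for $S\subseteq\{1,\ldots,k\}$ the concept $\bigwedge_{i\in S}\mu(n_i)$ has extent $\{g_j : j\notin S\} \cup (N\setminus\{n_i : i\in S\})'$ restricted appropriately, and distinct $S$ give distinct concepts because the $g_j$ separate them. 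So the sub-meet-semilattice generated by $\{\mu(n_i)\}$ has exactly $2^k$ elements and is a Boolean lattice of dimension $k$; this uses the cited fact \cite{albano2015,Koyda2021} that $k$-dimensional contranominal scales correspond to $k$-dimensional Boolean suborders.

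First I would prove the forward direction. Assume $N$ is $k$-cubic. By definition there is $H$ with $\K[H,N]$ a contranominal scale of dimension $k$, so by the paragraph above the sub-meet-semilattice generated by the attribute concepts of $N$ is a Boolean lattice of dimension $k$. It remains to show it has no Boolean superlattice in $\underline{\BB}(\K)$. Suppose it did; then that larger Boolean suborder of dimension $k+1$ corresponds, again via \cite{albano2015,Koyda2021}, to a contranominal scale of dimension $k+1$ whose attribute set $\tilde N$ contains $N$ (the extra coatoms of the Boolean lattice are attribute concepts of the extra attributes, and the $k$ original generators sit inside). Then $\tilde N \supseteq N$ is $(k+1)$-cubic — or is contained in something even larger which is — contradicting the maximality clause in the definition of $k$-cubic. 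Hence no Boolean superlattice exists.

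For the converse, assume the sub-meet-semilattice $L_N$ generated by the attribute concepts of $N$ is a Boolean lattice of dimension $k$ with no Boolean superlattice in $\underline{\BB}(\K)$. A $k$-dimensional Boolean suborder yields, by \cite{albano2015,Koyda2021}, a contranominal scale of dimension $k$ on some object set $H$ using attributes whose concepts are the coatoms of $L_N$; since $L_N$ is generated by the $|N|$ attribute concepts and is Boolean of dimension $k$, exactly $k$ of those generators are the coatoms and one checks the corresponding subcontext $\K[H,N]$ (on the full $N$) is still contranominal of dimension $k$ — the non-coatom generators, if any, are meets of coatoms and do not obstruct. Thus the first condition for $N$ being $k$-cubic holds. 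For the maximality clause, suppose some $\tilde N \supseteq N$ were $(k+1)$-cubic; then by the forward direction (already proved) the sub-meet-semilattice generated by the attribute concepts of $\tilde N$ is a Boolean lattice of dimension $k+1$, and since it contains the generators of $N$ it contains $L_N$ as a sub-meet-semilattice, i.e.\ $L_N$ has a Boolean superlattice in $\underline{\BB}(\K)$ — contradiction. So $N$ is $k$-cubic.

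I expect the main obstacle to be the bookkeeping in the converse at the point where $N$ may properly contain the set of coatom-generators: I must argue that having extra attributes in $N$ whose concepts are non-coatom meets does not prevent $\K[H,N]$ itself from being a contranominal scale of dimension exactly $k$. The cleanest way is to observe that $|N|$ distinct attribute concepts generating a Boolean lattice of dimension $k$ forces $|N|=k$ (the coatoms of a $k$-dimensional Boolean lattice are its only meet-irreducibles and there are exactly $k$ of them, and distinct attributes have distinct attribute concepts in a reduced context — or, in general, one passes to the clarified/reduced context and invokes that its concept lattice is isomorphic to $\underline{\BB}(\K)$, as recalled in the Foundations). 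With $|N|=k$ the generators \emph{are} the coatoms and the contranominal scale on $\K[H,N]$ is immediate. I would state this clarification explicitly and otherwise keep the argument at the level of the correspondence already cited.
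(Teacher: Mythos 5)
The paper states this proposition without proof, so there is nothing to compare against; judged on its own terms, your argument has a genuine gap in the forward direction. Part 1 is fine: a contranominal scale on $N$ makes the $2^{|N|}$ meets $\bigwedge_{n\in S}\mu(n)$ pairwise distinct, so $L_N$ is Boolean of dimension $k$, and your converse is also essentially sound once $|N|=k$ is forced via reducedness. The gap is the step ``a Boolean superlattice of dimension $k+1$ containing $L_N$ corresponds to a contranominal scale of dimension $k+1$ whose attribute set $\tilde N$ contains $N$, because the extra coatoms are attribute concepts of the extra attributes and the original generators sit inside.'' The cited equivalence only says that a $(k{+}1)$-dimensional Boolean suborder somewhere yields a $(k{+}1)$-contranominal somewhere; it does not localize its attribute set, and your structural claim is false: the coatoms of a Boolean suborder need not be attribute concepts, and although $L_N$ shares its top (the global top) with any superorder, its coatoms $\mu(n)$ may lie arbitrarily deep in the larger cube. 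Concretely, take $G=\{g_0,\dots,g_5\}$, $M=\{n,p,q,r,s\}$ with $g_0'=M$, $g_1'=\{p,q\}$, $g_2'=\{r,s\}$, $g_3'=\{p,q,r,s\}$, $g_4'=\{p,r,s\}$, $g_5'=\{p,q,r\}$; this context is clarified and reduced, and $N=\{n\}$ is $1$-cubic, since the only object with $n$ is $g_0$, which has every attribute, so no contranominal of dimension $2$ uses $n$. Yet $\{(G,G'),\mu(q),\mu(s),\mu(n)\}$ is a Boolean suborder of dimension $2$ containing $L_N=\{(G,G'),\mu(n)\}$, because $\mu(q)\parallel\mu(s)$ and $\mu(n)$ lies strictly below both. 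So under the reading your argument actually uses (``superlattice'' $=$ any Boolean suborder containing $L_N$), the implication you assert — and with it the forward direction — fails.

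For the statement to have a chance, ``Boolean superlattice'' must be read more restrictively, e.g.\ as a meet-closed Boolean suborder containing $L_N$, and then the forward direction needs a real argument that your proposal does not contain: for instance, for $k=1$ meet-closedness forces $\mu(n)$ to be the bottom $x\wedge y$ of the larger cube with incomparable $x,y>\mu(n)$, contradicting meet-irreducibility of $\mu(n)$ in a reduced context unless some attribute concept is incomparable to $\mu(n)$, which is exactly a $2$-contranominal containing $n$; the general case needs an analogous construction of the extra attribute and objects (an argument in the style of the cited Boolean-suborder/contranominal correspondence, but relativized to $N$), not the black-box citation. A smaller point: your parenthetical ``or is contained in something even larger which is'' silently replaces the paper's recursive maximality clause by ``$N$ is not contained in the attribute set of any larger contranominal scale''; that is surely the intended reading (and the one under which the proposition can hold), but it should be stated explicitly rather than absorbed into the definition.
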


The contranominal influence thus measures the impact of an attribute on the lattice structure.
In this, only the maximal contranominal scales are considered since the smaller non maximal-ones have no additional structural impact.
As each contranominal scale of dimension $k$ corresponds to $2^k$ concepts, we scale the number of attribute combinations with this factor.
To distribute the impact of a contranominal scale evenly over all involved attributes, the measure is scaled by $\frac{1}{k}$.
With this measure we now define the notions of \texttt{$\delta$-adjusting}.

\begin{definition}
  Let $\K=(G,M,I)$ be a formal context and $\delta\in[0,1]$.
  Let $N \subset M$ minimal such that $\frac{|N|}{|M|}\geq\delta$, $\zeta(n)<\zeta(m)$ for all $n\in N, m\in M\setminus N$.
  We call $\mathbb{A}_{\delta}(\K)\coloneqq \K[G,N]$ the \emph{$\delta$-adjusted subcontext} of $\K$ and $\underline{\BB}(\A_{\delta}(\K))$ the \emph{$\delta$-adjusted sublattice} of $\underline{\BB}(\K)$.
\end{definition}

Note, that \texttt{$\delta$-adjusting} always results in unique contexts.
Moreover, every \texttt{$\delta$-adjusted} sublattice is a sub-meet-semilattice of the original one \cite[Prop 31]{fca-book}. For every context $\K=(G,M,I)$ it holds that $\A_1=\K$ and
$\A_0=\K[G,\emptyset]$.
A context from a medical diagnosis dataset with measured contranominal influence and computed \texttt{$\frac{1}{2}$-adjusted} subcontext can be retraced in \cref{runexp_Kontext}.

\begin{figure}[!ht]
\null\hfill
                \begin{cxt}%
                        \att{a}%
                        \att{b}%
                        \att{c}%
                        \attc{d}%
                        \attc{e}%
                        \att{f}%
                        \att{g}%
                        \attc{h}%
                        \attc{i}%
                        \attc{j}%
                        \att{k}%
                        \attc{l}%
                        \att{m}%
                        \attc{n}%
                        \attc{o}%
                        \obj{..xcc.xcPc.P.Pc}{111} %
                        \obj{x..PcxxcPP.cxPc}{119} %
                        \obj{.xxcc..PccxPxPP}{31} %
                        \obj{x.xPc..ccc.P.Pc}{32} %
                        \obj{.x.ccx.PccxPxPP}{17} %
                        \obj{.XXcc..ccc.PxPP}{27} %
                        \obj{XX.PPXXPPPXcXPP}{105} %
                        \obj{.X.ccX.PPcXPXcP}{58} %
                        \obj{X..PcX.cPP.cXcc}{65} %
                        \obj{X.XPP.XPPPXc.Pc}{103} %
                        \obj{.XXcc..PPcXPXcP}{56} %
                        \obj{XXXPP.XPPPXcXPP}{98} %
                        \obj{.XXcc..cPc.PXcP}{43} %
                        \obj{X.XPc..cPc.P.cc}{50} %
                \end{cxt}
\hfill
    \begin{tabular}{llcccc}
    \toprule
     &Attribute Name &2&3&4& $\zeta$ \\
    \midrule
        a:& Lumbar pain y & 1 & 22 & 6 &  84.7 \\
        b:& Bladder inflammation y& 1 & 29 & 0 & 79.3\\
        c:& Burning n & 1 & 31 & 9 & 120.7 \\
        \cb d:&\cb Lumbar pain n &\cb 2 &\cb 19 &\cb 0 &\cb 54.7\\
        \cb e:&\cb Nausea n &\cb 0 &\cb 16 &\cb 3 &\cb 54.7\\
        f:& Burning y & 1 & 31 & 0 & 84.7\\
        g:& Temp. $\in [40.0,~ 42.0]$& 2 & 24 & 5 & 88.0\\
        \cb h:&\cb Micturition pains n &\cb 1 &\cb 18 &\cb 5 &\cb 70.0\\
        \cb i:&\cb Temp. $\in[35.0,~ 37.5]$ &\cb 3 &\cb 16 &\cb 0 &\cb 48.7\\
        \cb j:&\cb Pelvis nephritis n &\cb 1 &\cb 19 &\cb 1 &\cb 56.7\\
        k:& Micturition pains y & 1 & 33 & 0 & 90.0\\
        \cb l:&\cb Pelvis nephritis y &\cb 3 &\cb 17 &\cb 0 &\cb 51.3\\
        m:& Urine pushing y & 0 & 21 & 7 & 84.0\\
        \cb n:&\cb Temp. $\in[37.5,~ 40.0]$ &\cb 2 &\cb 23 &\cb 3 &\cb 77.3\\
        \cb o:&\cb Bladder inflammation n &\cb 1 &\cb 26 &\cb 1 &\cb 75.3\\
        \bottomrule
    \end{tabular}
\hfill\null\\
\null\hfill
        \begin{minipage}{0.42\textwidth}
                \includegraphics[width=\textwidth]{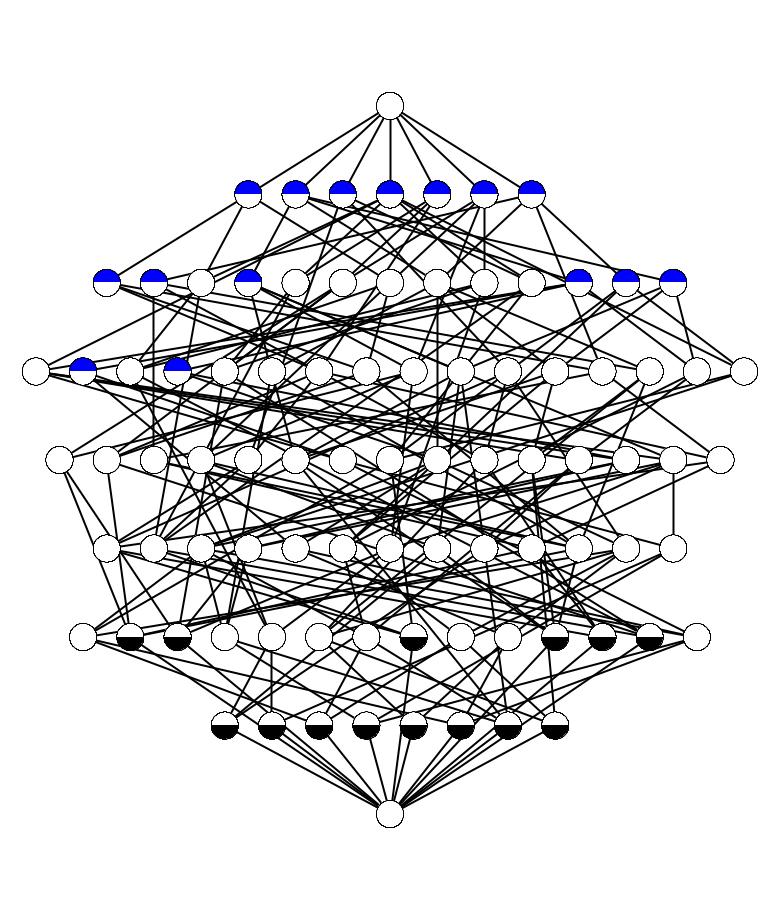}
        \end{minipage}
        \hfill
        \begin{minipage}{0.32\textwidth}
                \includegraphics[width=\textwidth]{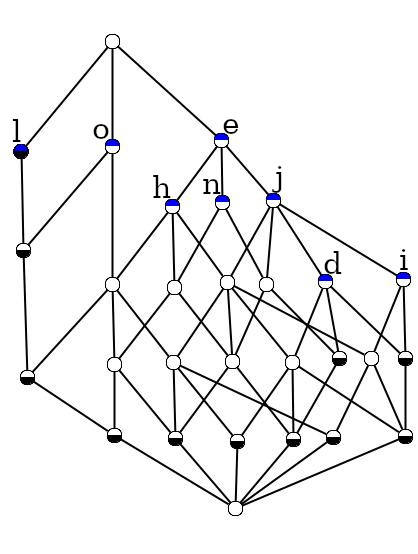}
        \end{minipage}
        \hfill\null
    \caption{Top: Reduced and clarified medical diagnosis dataset~\cite{diagnosis}. The \texttt{$\frac{1}{2}$-adjusted} subcontext is highlighted. The objects are patient numbers. The attributes are described in the figure together with the count of k-cubic subcontexts and their contranominal influence~$\zeta$.
    Bottom: Lattice of the original (left) and the \texttt{$\frac{1}{2}$-adjusted} (right) dataset.}
    \label{runexp_Kontext}
\end{figure}

It is important to observe that for a context $\K$ and its reduced context $\K_r$ a different attribute set can remain if they are \texttt{$\delta$-adjusted}, as can be seen in \cref{fig:reducedCN}.
Therefore, the resulting concept lattices for $\K$ and $\K_r$ can differ.
To preserve structural integrity between \texttt{$\delta$-adjusted} formal contexts and their concept lattices we thus recommend to only consider clarified and reduced formal contexts.
In the rest of this work, these steps are therefore performed prior to \texttt{$\delta$-adjusting.}
Note, that since no attributes are generated no new contranominal scales can arise by \texttt{$\delta$-adjusting}.
Furthermore, removing attributes can not turn another attribute from irreducible to reducible.
On the other hand however, objects can become reducible as can be seen again in \cref{fig:reducedCN}.
While $6$ is irreducible in the original context, it is reducible in $\A_{\frac{3}{5}}(\K)$.

\begin{figure}[t]
        \begin{minipage}{0.26\textwidth}
        \centering
            \includegraphics[height=10.8em]{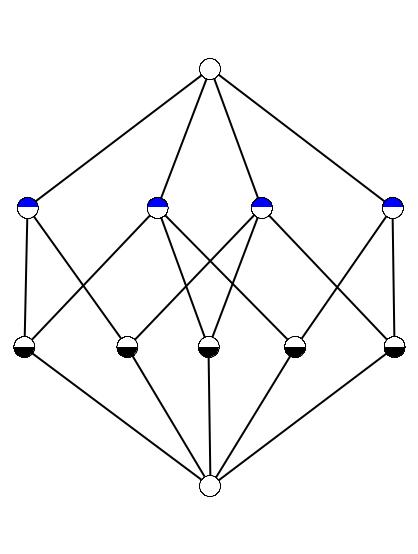}
        \end{minipage}
    \begin{minipage}{0.23\textwidth}
                \centering
                \begin{cxt}%
                        \att{a}%
                        \att{b}%
                        \att{c}%
                        \attc{d}%
                        \attc{e}%
                        \obj{xx.PP}{1} %
                        \obj{x.xPP}{2} %
                        \obj{.xxPP}{3} %
                        \obj{..xcc}{4} %
                        \obj{...cP}{5} %
                        \obj{.x.cP}{6} %
                \end{cxt}
        \end{minipage}
        \begin{minipage}{0.10\textwidth}
        \centering
            \includegraphics[width=\textwidth]{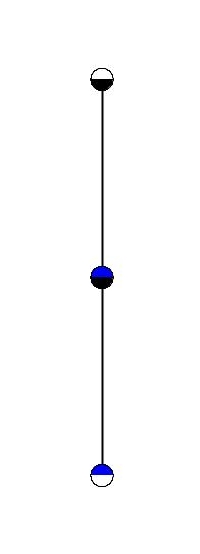}
        \end{minipage}
        \begin{minipage}{0.25\textwidth}
        \centering
                \begin{cxt}%
                        \att{a}%
                        \att{b}%
                        \att{c}%
                        \attc{d}%
                        \obj{xx.P}{1} %
                        \obj{x.xP}{2} %
                        \obj{.xxP}{3} %
                        \obj{..xc}{4} %
                        \obj{...c}{5} %
                        \obj{.x.c}{6} %
                \end{cxt}
        \end{minipage}
        \begin{minipage}{0.1\textwidth}
        \centering
            \includegraphics[width=\textwidth]{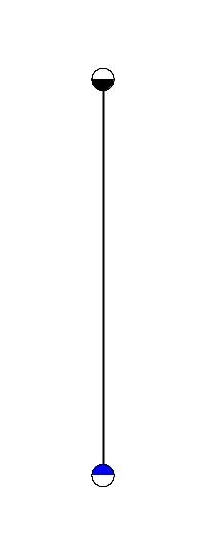}
        \end{minipage}
        \caption{A concept lattice together with two of its contexts $\K$ and $\K_r$ whereby $\K_r$ is attribute reduced while $\K$ contains the reducible element $e$.
        In both contexts the \texttt{$\frac{3}{5}$-adjusted} subcontext is highlighted.
        Their lattices (right to each context) differ.}
        \label{fig:reducedCN}
\end{figure}

\subsection{Properties of Implications}

In this section we investigate \texttt{$\delta$-adjusting} with respect to the influence on implications.
Let $\K=(G,M,I)$ be a formal context, $m \in M$ and $X\rightarrow Y$ an implication in $\K$.
If $m$ is part of the implication; i.e., $m \in X$ or $m \in Y$, this implication vanishes.
Therefore the removal of $m$ in an implication $X\rightarrow Y$ of some implication base $\mathcal{C}(\K)$ is of interest.
If $m$ is neither part of a premise nor a conclusion of an implication $X\rightarrow Y\in \mathcal{C}(\K)$ its removal has no impact on this implication base.
In case $m\in Y$, its elimination changes all implications $X \rightarrow Y$ to $X\rightarrow Y\setminus \{m\}$.
Note that, even though all implications can still be deduced from $\mathcal{C}' = \{X \rightarrow Y: X \rightarrow Y \cup\{m\} \in \mathcal{C}(\K)\}$ this set is not necessarily minimal and in this case is not a base.
Especially if $\{m\}=Y$ the resulting $X\rightarrow\emptyset$ is never part of an implication base.
In case $m \in X$, every $Z\rightarrow X$ in the base is changed to $Z \rightarrow X\setminus \{m\} \cup Y$ while $X\rightarrow Y$ is removed.
Similarly to the conclusion case, the resulting set of implications can be used to deduce all implications but is not necessarily an implication base.
Moreover, no new implications can emerge from the removal of attributes, as the following shows.

\begin{lemma}
        Let $\K=(G,M,I)$ be a formal context, $N\subset M$ and $X,Y\subseteq N$ with $X\rightarrow Y$ a non-valid implication in $\K$.
        Then $X\rightarrow Y$ is also non-valid in $\K[G,N]$.
\end{lemma}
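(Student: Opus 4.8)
The plan is to unwind the definitions of validity and the derivation operators, and track how the derivation of an attribute set changes when we pass from $\K$ to the subcontext $\K[G,N]$. Write $\cdot'$ for the derivation in $\K$ and $\cdot^{\bullet}$ for the derivation in $\K[G,N]$. Since $X,Y\subseteq N$ and the object set is unchanged, for any $Z\subseteq N$ we have $Z^{\bullet}=\{g\in G\mid \forall m\in Z\colon (g,m)\in I\}=Z'$; restricting the incidence to $G\times N$ does not affect which objects have all attributes of a set contained in $N$. So the object-side derivation of subsets of $N$ is literally the same in both contexts.

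With that observation the claim is almost immediate. By definition $X\rightarrow Y$ being non-valid in $\K$ means $X'\not\subseteq Y'$, i.e.\ there is some object $g\in X'$ with $g\notin Y'$. First I would note $X'=X^{\bullet}$ and $Y'=Y^{\bullet}$ by the paragraph above, so the same $g$ witnesses $X^{\bullet}\not\subseteq Y^{\bullet}$, which is exactly the statement that $X\rightarrow Y$ is non-valid in $\K[G,N]$. One should double-check the edge cases: if $X=\emptyset$ then $X'=G=X^{\bullet}$, and if $Y=\emptyset$ then $Y'=G$, in which case the implication is valid in both contexts, so the hypothesis is vacuous and there is nothing to prove; these cause no trouble.

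There is essentially no obstacle here: the only thing to be careful about is making explicit that restricting the incidence relation to $G\times N$ leaves the derivations of subsets of $N$ unchanged, since this is the single fact doing all the work. I would phrase the proof contrapositively or directly, whichever reads more cleanly — directly is fine: assume non-validity in $\K$, produce the witnessing object, and observe it still witnesses non-validity in $\K[G,N]$. It may also be worth remarking that the converse (a non-valid implication of $\K[G,N]$ need not be non-valid in $\K$) fails in general precisely because $\K$ has more attributes — but that is a side comment, not needed for the lemma.

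\begin{proof}
Write $\cdot'$ for the derivation in $\K$ and $\cdot^{\bullet}$ for the derivation in $\K[G,N]=(G,N,I\cap(G\times N))$.
For any $Z\subseteq N$ and any $g\in G$ we have, using $Z\subseteq N$,
\[
g\in Z^{\bullet}\iff \forall m\in Z\colon (g,m)\in I\cap(G\times N)\iff \forall m\in Z\colon (g,m)\in I\iff g\in Z'.
\]
Hence $Z'=Z^{\bullet}$ for every $Z\subseteq N$; in particular $X'=X^{\bullet}$ and $Y'=Y^{\bullet}$.
Since $X\rightarrow Y$ is non-valid in $\K$, we have $X'\not\subseteq Y'$, so there is an object $g\in X'$ with $g\notin Y'$.
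By the equalities above, $g\in X^{\bullet}$ and $g\notin Y^{\bullet}$, so $X^{\bullet}\not\subseteq Y^{\bullet}$, i.e.\ $X\rightarrow Y$ is non-valid in $\K[G,N]$.
\null\hfill$\square$
\end{proof}
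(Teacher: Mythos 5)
Your proof is correct and follows essentially the same route as the paper: non-validity in $\K$ gives a witnessing object $g$ with $X\subseteq g'$ and $Y\not\subseteq g'$, and since restricting the incidence to $G\times N$ does not change derivations of subsets of $N$, the same $g$ remains a counterexample in $\K[G,N]$. Your version merely makes the preservation of derivations explicit, which the paper states more briefly as ``the objects are identical on $N$.''
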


\begin{proof}
        Since $X\rightarrow Y$ is not valid in $\K$, there exists an object $g\in G$ with $X\subseteq g'$ and $Y\not\subseteq g'$.
        As the objects in $\K$ and $\K[G,N]$ are identical on $N$ (especially if $X,Y\subseteq N$), g is a counterexample for $X\rightarrow Y$ in $\K[G,N]$. \hfill$\square$
\end{proof}

\ifhideproofs
The lemma follows from the fact that if $X'\subset Y'$ in $\K$, then $X'\subset Y'$ in a subcontext of $\K$ with all objects.
\fi
Thus, the relationship between the implications of a subcontext with all objects and the original context is as follows:

\begin{corollary}
        Let $\K=(G,M,I)$ be a formal context, $\SC=\K[G,N]$ and $N \subset M$.
        Then $Imp(\SC)\subseteq Imp(\mathbb{K})$.
\end{corollary}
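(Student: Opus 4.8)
The plan is to derive the corollary directly from the preceding lemma together with the definition of a valid implication. The key observation is that an implication $X \rightarrow Y$ with $X, Y \subseteq N$ is either valid in a context or not, so it suffices to show that every implication valid in $\SC = \K[G,N]$ is also valid in $\K$. Equivalently, by contraposition, every implication $X \rightarrow Y$ (with $X, Y \subseteq N$) that fails in $\K$ must also fail in $\SC$ — but this is exactly the statement of the lemma just proved. Hence the containment $Imp(\SC) \subseteq Imp(\K)$ follows.

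More concretely, I would argue as follows. Let $X \rightarrow Y \in Imp(\SC)$. By definition of the implications of a context, $X, Y \subseteq N$ and $X \rightarrow Y$ is valid in $\SC$. Suppose for contradiction that $X \rightarrow Y \notin Imp(\K)$. Since $X, Y \subseteq N \subseteq M$, the pair $X \rightarrow Y$ is a syntactically well-formed implication over $M$, so the only way it can fail to be in $Imp(\K)$ is that it is not valid in $\K$, i.e.\ it is a non-valid implication of $\K$ in the sense of the lemma. Applying the lemma with this $N$, we conclude that $X \rightarrow Y$ is non-valid in $\K[G,N] = \SC$, contradicting $X \rightarrow Y \in Imp(\SC)$. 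Therefore $X \rightarrow Y \in Imp(\K)$, and since $X \rightarrow Y$ was arbitrary, $Imp(\SC) \subseteq Imp(\K)$.

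I do not anticipate a genuine obstacle here; the corollary is essentially a restatement of the lemma under contraposition, and the only care needed is bookkeeping about the domains: one must note that an implication over $N$ is automatically an implication over $M$ (since $N \subseteq M$), so that the notions of validity in $\K$ and in $\SC$ can be compared on the same object $X \rightarrow Y$. If anything, the mildly delicate point is purely definitional — making explicit that "being an implication of $\K$" means exactly "being valid in $\K$" once the premise and conclusion are subsets of $M$ — but the excerpt's definitions already make this immediate.
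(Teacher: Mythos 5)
Your proposal is correct and matches the paper's intent: the corollary is stated as an immediate consequence of the preceding lemma, and your contrapositive argument (valid in $\K[G,N]$ implies valid in $\K$, since any counterexample in $\K$ would survive in the subcontext) is exactly the derivation the paper relies on. The domain bookkeeping you mention ($X,Y\subseteq N\subseteq M$) is the only point needing care, and you handle it.
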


This influences the size of the base of a \texttt{$\delta$-adjusted} subcontext as follows:

\begin{lemma}
    Let $\K=(G,M,I)$ a formal context, and $\mathbb{S}=\K[G,N]$ and $N\subset M$.
    Then $|\mathcal{C}(\mathbb{S})| \le |\mathcal{C}(\mathbb{K})|$.
\end{lemma}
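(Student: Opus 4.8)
The plan is to prove the bound by manufacturing, out of the canonical base $\mathcal{C}(\K)$, an implication base of $\SC$ that contains no more implications than $\mathcal{C}(\K)$; since $\mathcal{C}(\SC)$ is by definition a base of $\SC$ of minimum cardinality, this immediately yields $|\mathcal{C}(\SC)|\le|\mathcal{C}(\K)|$. By induction on $|M\setminus N|$ it suffices to treat the removal of a single attribute $m$, that is $\SC=\K[G,M\setminus\{m\}]$. Here, writing $A\mapsto A''$ for the intent closure of $\K$, the Corollary (read for this $\SC$, together with the fact that the derivation of a set $A\subseteq M\setminus\{m\}$ is literally the same in $\K$ and in $\SC$) shows that the intent closure of $\SC$ is exactly $A\mapsto A''\setminus\{m\}$ on $\mathcal{P}(M\setminus\{m\})$; equivalently, a subset of $M\setminus\{m\}$ is an intent of $\SC$ iff its $\K$-closure adds nothing but possibly $m$. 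So the task is to describe the intents of $\SC$ by at most $|\mathcal{C}(\K)|$ implications.

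I would then perform the surgery on $\mathcal{C}(\K)$ indicated in the paragraph preceding the lemma. Each canonical implication $P\to P''$ falls into one of three groups: those with $m\notin P\cup P''$, kept unchanged; the \emph{$m$-producing} ones with $m\in P''$, $m\notin P$, replaced by $P\to P''\setminus\{m\}$ and discarded if this became trivial (precisely when $P''=P\cup\{m\}$); and the \emph{$m$-consuming} ones with $m\in P$, which are deleted and whose effect is re-routed, by a resolution step, into the $m$-producing implications, replacing the consumer $P\to P''$ by one implication $(P\setminus\{m\})\cup Q\to P''\setminus\{m\}$ for each producer $Q\to Q''$. Every resulting implication is valid in $\SC$ by the Corollary; and one must check completeness, namely that iterating the resulting set recovers $A\mapsto A''\setminus\{m\}$. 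This is done by running the $\K$-closure of a set $A\subseteq M\setminus\{m\}$ and verifying that whenever $m$ is generated and subsequently triggers an $m$-consuming implication, the non-$m$ attributes it would produce are already delivered by the corresponding resolved implications.

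The hard part is the bookkeeping that keeps the count from growing, and it is genuinely necessary: for an \emph{arbitrary} base the resolution step blows up, since the number of resolvents is the product of the numbers of producing and consuming implications, which can exceed their sum (the classical cost of variable elimination in implicational systems). What rescues the \emph{canonical} base is that its conclusions are already closed sets, so a producing implication whose closure happens to contain the premise of some consumer already carries that consumer's conclusion and need not be resolved against it; a case analysis — using that pseudo-intents are quasi-closed and that $\mathcal{C}(\K)$ is minimal — should show that every genuinely new implication one is forced to add is paid for by a producing or consuming implication that has become trivial or redundant. Equivalently, phrased on the level of pseudo-intents: a pseudo-intent of $\SC$ is either a pseudo-intent of $\K$ with $m$ deleted, or it "absorbs" an $m$-producing pseudo-intent of $\K$ together with an $m$-consuming one (for instance $\{q,d\}$ absorbing $\{q\}$ and $\{d,m\}$ in a small example), and one must verify that distinct pseudo-intents of $\SC$ never claim the same witnesses among the pseudo-intents of $\K$. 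I expect this combinatorial balancing to be the main obstacle; once it is set up correctly, the remaining verifications are routine.
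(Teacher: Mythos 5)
You are right that the whole weight of the argument rests on the ``combinatorial balancing'' you postpone, but that step is not merely hard: it cannot be carried out, because the inequality itself fails. Take $M=\{q_1,q_2,d_1,d_2,d_3,e_1,e_2,e_3,m\}$ and any context $\K$ whose intents are exactly the sets respecting the five implications $\{q_1\}\to\{m\}$, $\{q_2\}\to\{m\}$ and $\{d_j,m\}\to\{e_j\}$ for $j=1,2,3$ (for instance, take the closed sets themselves as objects with $(C,x)\in I$ iff $x\in C$). The pseudo-intents of $\K$ are precisely $\{q_1\},\{q_2\},\{d_1,m\},\{d_2,m\},\{d_3,m\}$: any larger non-closed candidate containing some $q_i$ must contain $m$ and then, properly containing some $\{d_j,m\}$, also $e_j$, which forces closedness; so $|\mathcal{C}(\K)|=5$. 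Now remove $m$ and set $\SC=\K[G,M\setminus\{m\}]$. For $Y\subseteq M\setminus\{m\}$ the closure in $\SC$ is $Y\cup\{e_j\mid d_j\in Y\}$ if $Y$ meets $\{q_1,q_2\}$ and $Y$ otherwise. Hence each of the six sets $\{q_i,d_j\}$ is a pseudo-intent of $\SC$ (it is not closed, and all its proper subsets are closed); moreover any complete set of implications of $\SC$ must contain, for every pair $(i,j)$, an implication whose premise is exactly $\{q_i,d_j\}$, since the first implication that fires when closing $\{q_i,d_j\}$ must have its premise inside this set, and the proper subsets, being closed, yield nothing new. Thus $|\mathcal{C}(\SC)|\ge 6>5=|\mathcal{C}(\K)|$. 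This is literally the producer/consumer pattern you flag yourself ($\{q_i\}$ producing $m$, $\{d_j,m\}$ consuming it, $\{q_i,d_j\}$ ``absorbing'' both): with $p$ producers and $q$ consumers the subcontext needs all $pq$ resolvents as premises while $\K$ has only $p+q$ pseudo-intents, so the blow-up you feared for arbitrary bases already occurs for the canonical base whenever $(p-1)(q-1)>1$.

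So the missing case analysis ``paying'' for each resolvent with a trivialized or redundant implication does not exist, and the induction on single-attribute removal cannot be closed; no property of quasi-closedness or minimality of $\mathcal{C}(\K)$ rescues it. For comparison, the paper's own proof performs the same surgery and silently assumes each consumer $X\to Y$ with $m\in X$ can be replaced one-for-one by some $Z\to Y$ with $Z\to X$ taken from the base; in the example above no such $Z$ exists in $\mathcal{C}(\K)$, so the published argument breaks at exactly the point where your proposal stalls. What does survive is the part of your plan before the counting: the restriction $Imp(\SC)\subseteq Imp(\K)$, the description of the $\SC$-closure as $A\mapsto A''\setminus\{m\}$, and the fact that the resolved implication set is still complete for $\SC$ --- it just need not be small, so the cardinality claim of the lemma is not provable this way (or, as the example shows, at all).
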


\begin{proof}
Assume not; i.e., the $|\mathcal{C}(\mathbb{S})| > |\mathcal{C}(\mathbb{K})|$. Let $J$ be the set of implications containing $m\in M\setminus N$ in $\mathcal{C}(\mathbb{K})$. A set of implications that can generate the whole implication set with size $|\mathcal{C}(\mathbb{K})|$ or less is given by altering the implications $X\rightarrow Y$ $X,Y\in M$ in $|\mathcal{C}(\mathbb{K})|$ as follows. If $m\in Y$, $X\rightarrow Y$ is replaced by $X\rightarrow Y\setminus m$. If $m\in X$ and $Z\rightarrow X \in \mathcal{C}(\K)$, $X\rightarrow Y$ is replaced by $Z\rightarrow Y$. This yields a contradiction.  \hfill$\square$
\end{proof}

\ifhideproofs
To prove this lemma, one can construct an implication set of size at most $|\mathcal{C}(\K)|$  that generates all implications.
\fi
Revisiting the context in \cref{runexp_Kontext} together with its \texttt{$\frac{1}{2}$-adjusted} subcontext the selection of nearly $50\%$ of the attributes (8 out of 15) results in a sub-meet-semilattice containing only $33\%$ of the concepts (29 out of 88).
Moreover, the implication base of the original context includes 40 implications.
After the alteration its size is decreased to 11 implications.

\section{Evaluation and Discussion}
\label{sec:evaluation}

In this section we evaluate the algorithm \texttt{ContraFinder} and the process of \texttt{$\delta$-adjusting} using real-world datasets.

\subsection{Datasets}
\begin{table}[b]
    \centering
    \caption{Datasets used for the evaluation of \texttt{ContraFinder} and \texttt{$\delta$-adjusting}.}
    \label{tab:descriptiv}
    \begin{tabular}{lrrrrr}
    \toprule
        & Zoo & Students &  Wikipedia & Wiki44k & Mushroom \\
        \midrule
        Objects: & 101 & 1000 & 11273 & 45021 & 8124\\
        Attributes: & 43 & 32 & 102 & 101& 119\\
        Density: & 0.40 & 0.28 & 0.015 & 0.045& 0.19\\
        Number of concepts: & 4579  & 17603 & 14171 & 21923& 238710\\
        Mean objects per concept: & 18.48 & 16.73 & 20.06 & 109.47 & 91.89\\
        Mean attributes per concept: & 7.32 & 5.97 & 5.88 & 7.013 & 16.69 \\
        Size of canonical base: & 401 & 2826 & 4575 & 7040 & 2323 \\
         \bottomrule
    \end{tabular}
  \end{table}
\cref{tab:descriptiv} provides descriptive properties of the datasets used in this work.
The zoo~\cite{Dua:2019,rowley1990pc} and mushroom~\cite{Dua:2019,schlimmer1981mushroom} datasets are classical examples often used in FCA based research such as the \texttt{TITANIC} algorithm.
The Wikipedia~\cite{kunegis2013konect} dataset depicts the edit relation between authors and articles while the
Wiki44k dataset is a dense part of the Wikidata knowledge graph.
The original wiki44k dataset was taken from \cite{Ho.2018}, in this work we conduct our experiments on an adapted version by \cite{Hanika.2019}.
Finally, the Students dataset~\cite{students} depicts grades of students together with properties such as parental level of education.
All experiments are conducted on the reduced and clarified versions of the contexts.
For reproducibility the adjusted versions of all datasets are published in \cite{dataset}.

\subsection{Runtime of ContraFinder}

\texttt{ContraFinder} is a recursive backtracking algorithm that iterates over all attribute sets containing contranominal scales.
Thus, the worst case runtime is given by $O(n^k)$ where $n$ is the number of attributes of the formal context and $k$ the maximum dimension of a contranominal scale in it.
The Branch-And-Search algorithm from \cite{Xiao2017} has a runtime of $O(1.3752^n)$ where $n$ is the sum of attributes and objects.
Finally the Bron-Kerbosch algorithm has a worst-case runtime of $O(3^{n/3})$ with $n$ being the number of non-incident object-attribute pairs.

To compare the practical runtime of the algorithms we test them on the previously introduced real world datasets.
We report the runtimes in \cref{tab:runtime}, together with the dimension of the larges contranominal scale and the total number of contranominal scales.
Note, that for larger datasets we are not able to compute the number of all contranominal scales using Bron-Kerbosch (from Students) and the Branch-And-Search algorithm (Mushroom) below 24 hours due to their exponential nature and thus stopped the computations.
All experiments are conducted on an Intel Core i5-8250U processor with 16 GB of RAM.

\begin{table}[!b]
    \centering
    \caption{Experimental runtimes of the different algorithms on all datasets.}
    \label{tab:runtime}
    \begin{tabular}{lrrrrr}
    \toprule
        & Zoo & Students &  Wikipedia & Wiki44k & Mushroom \\
        \midrule
        ContraFinder: & 2.43 & 7.36 & 17.15 & 35.65 & 1961.0 \\
        Bron Kerbosch searching cliques:  & 138.70 & >86400 & >86400 & >86400 & >86400\\
        Branch and Search algorithm: & 14.40 & 12005.82 & 1532.17 & 16783.58 & >86400 \\
        \midrule
        Dim. of max.\ contranominal scale: & 7 & 8 & 9 & 11 & 10  \\
        Number of contranominal scales: &$4.1\cdot10^7$ & $7.8\cdot 10^9$ & $9.9\cdot 10^8$ & $2.0\cdot 10^{14}$ & $1.2\cdot 10^{19}$

 \\
        \bottomrule
    \end{tabular}
\end{table}

\begin{table}[t]
    \centering
    \caption{Evaluation of $k$-adjusted contexts. The standard deviation is given in parenthesis. "Acc of DT" is the abbreviation for "Accuracy of the Decision Tree".}
    \label{tab:experiments}
    \begin{tabular}{llrrrrr}
    \toprule
        && Zoo & Students &  Wikipedia & Wiki44k & Mushroom \\
        \midrule
        $|\BB(\K)|$: &$\frac{1}{2}$-adjusted:& \textbf{90} & \textbf{312} & \textbf{65} & 323 & \textbf{426} \\
        &Sampling: & 496 (205) & 1036 (327) & 833 (517) & 1397 (627) & 8563 (4532)  \\
        &Hanika et.al:& 95 & 341 & 67 & \textbf{254} & 561  \\
        \midrule
        $|\mathcal{C}(\K)|$: &$\frac{1}{2}$-adjusted:& 98 & \textbf{105} & 626 & \textbf{1003} & \textbf{339} \\
        &Sampling:& \textbf{95} (17) & 156 (35) & 758 (101)& 1360 (135) & 574 (93) \\
        &Hanika et.al:& 100 & \textbf{105} & \textbf{553} & 1091 & 490  \\
        \midrule
        Acc of DT: &$\frac{1}{2}$-adjusted:& 0.88 (0.08) & 0.88 (0.06) & \textbf{0.99} (0.01) & \textbf{0.98} (0.03) & \textbf{0.98} (0.02) \\
        &Sampling:& \textbf{0.89} (0.15) &0.81 (0.15) & 0.9 (0.14) & 0.95 (0.06) &
        0.92 (0.13) \\
        &Hanika et.al: & 0.88 (0.09) & \textbf{0.89} (0.06) & \textbf{0.99} (0.01)
        & \textbf{0.98} (0.16) & 0.97 (0.03) \\
        \bottomrule
    \end{tabular}
\end{table}

\subsection{Structural Effects of $\delta$-Adjusting}
We measure the number of formal concepts generated by the formal context as well as the size of the canonical base.
To demonstrate the effects of $\delta$-adjusting we focus on $\delta=\frac{1}{2}$.
Our two baselines are selecting the same number of attributes using random sampling and choosing the attributes of highest relative relevance as described in \cite{hanika2019relevant}.
It can be observed, that in all three cases the number of concepts heavily decrease.
However, this effect is considerably stronger for \texttt{$\frac{1}{2}$-adjusting} and the approach of Hanika et.al.\ compared to sampling.
Hereby, \texttt{$\frac{1}{2}$-adjusting} yields smaller concept lattices on four datasets.
A similar effect can be observed for the sizes of the canonical bases where our method yields three times in the smallest cardinality.

\subsection{Knowledge in the $\delta$-Adjusted Context}
To measure the degree of encapsulated knowledge in \texttt{$\delta$-adjusted} formal contexts we conduct the following experiment using once again sampling and the relative relevant attributes of Hanika et.al.\  as baselines.
In order to measure if the remaining subcontexts still encapsulates knowledge we train a decision tree classifier on them predicting an attribute that is removed beforehand.
This attribute is sampled randomly in each step.
To prevent a random outlier from distorting the result we repeat this same experiment 1000 times for each context and method and report the mean value as well as the standard-deviation in \cref{tab:experiments}.
The experiment is conducted using a 0.5-split on the train and test data.
For all five datasets, the results of the decision tree on the \texttt{$\frac{1}{2}$-adjusted} context are consistently high, however \texttt{$\frac{1}{2}$-adjusting} and the Hanika et.al.\ approach outperform the sampling approach.
Both this methods achieve the highest score on four contexts, in two of this cases the highest result is shared.
The single highest score of sampling is just slightly above the other two approaches.

\subsection{Discussion}

The theoretical runtime of \texttt{ContraFinder} is polynomial in the dimension of the maximum contranominal.
Therefore, compared to the baseline algorithms it performs better, the smaller the maximum contranominal scale in a dataset.
Furthermore, the runtime of Bron-Kerbosch is worse, the sparser a formal context, as the number of pairs that are non-incident increases and thus more vertices have to be iterated.
Finally, the Branch-And-Search algorithm is best in the case that the dimension of the maximum contranominal scale is not bounded.
To evaluate, how this theoretical properties translate to real world data, we compute the set of all contranominal scales with the three algorithms on the previously described datasets.
Only \texttt{ContraFinder} can compute the set of all contranominal scales on the larger datasets on our hardware under 24 hours.
The runtime of \texttt{ContraFinder} is thus superior to the other two on real-world datasets.

To evaluate the impact on the understandability of the \texttt{$\delta$-adjusted} formal contexts, we conduct the experiments measuring the sizes of the concept lattices and the canonical bases.
All three evaluated methods heavily decrease the size of the concept lattice as well as the canonical base.
Compared to the random sampling \texttt{$\frac{1}{2}$-adjusting} and the method of Hanika et.al.\ influence the size of this structural components much stronger.
Among those two, \texttt{$\frac{1}{2}$-adjusting} seems to slightly outperform the method of Hanika et.al.\ and is thus more suited to select attributes from a large dataset in order to be analyzed by a human.

To evaluate to what extent knowledge in the formal context of reduced size is encapsulated we conduct the experiment with the decision trees.
This experiment demonstrates that the selected formal subcontext can be used in order to deduce relationships of the remaining attributes in the context.
While meaningful implications are preserved and the implication set is downsized, \texttt{$\frac{1}{2}$-adjusted} lattices seem to be suitable to preserve large amounts of data from the original dataset.
Similar good results can be achieved with the method of Hanika et.al.; however, our algorithm combines this with producing smaller concept lattices and canonical bases and is thus more suitable for the task to prepare data for a human analyst by reducing sizes of structural constructs.

We conclude from these experiments that \texttt{$\delta$-adjusting} is a solution to the problem to make information more feasible for manual analysis while retaining important parts of the data.
In particular, if large formal contexts are investigated this method provides a way to extract relevant subcontexts.

\section{Conclusion}
\label{sec:conclusion}
In this work, we proposed the algorithm \texttt{ContraFinder} in order to enable the computation of the set of all contranominal scales in a formal context.
Using this, we defined the contranominal-influence of an attribute.
This measure allows us to select a subset of attributes in order to reduce a formal context to its\texttt{$\delta$-adjusted} subcontext.
The size of its lattice is significantly reduced compared to the original lattice and thus enables researchers to analyze and understand much larger datasets using Formal Concept Analysis.
Furthermore, the size of the canonical base, which can be used in order to derive relationships of the remaining attributes shrinks significantly.
Still, remaining data can be used to deduce relationships between attributes, as our classification experiment shows.
This approach therefore identifies subcontexts whose sub-meet-semilattice is a restriction of the original lattice of a formal context to a small meaningful part.

Further work in this area could leverage \texttt{ContraFinder} in order to compute the contranominal-relevance of attributes more efficiently to handle even larger datasets.
Moreover, a similar measure for objects could be introduced.
However, one should keep in mind that hereby false implications can arise.

\bibliographystyle{splncs04}
\bibliography{paper}

\begin{thebibliography}{10}
\providecommand{\url}[1]{\texttt{#1}}
\providecommand{\urlprefix}{URL }
\providecommand{\doi}[1]{https://doi.org/#1}

\bibitem{albano2015}
Albano, A., Chornomaz, B.: Why concept lattices are large - extremal theory for
  the number of minimal generators and formal concepts. In: 12th International
  Conference on Concept Lattices and Their Applications (CLA 2016). {CEUR}
  Workshop Proceedings, vol.~1466, pp. 73--86. CEUR-WS.org (2015)

\bibitem{sampling}
Boley, M., G{\"{a}}rtner, T., Grosskreutz, H.: Formal concept sampling for
  counting and threshold-free local pattern mining. In: {SIAM} International
  Conference on Data Mining, ({SDM} 2010). pp. 177--188. {SIAM} (2010)

\bibitem{Bron1973}
Bron, C., Kerbosch, J.: Finding all cliques of an undirected graph (algorithm
  457). Communications of the ACM  \textbf{16}(9),  575--576 (1973)

\bibitem{diagnosis}
Czerniak, J., Zarzycki, H.: Application of rough sets in the presumptive
  diagnosis of urinary system diseases. In: 9th International Conference on
  Artificial Intelligence and Security in Computing Systems (ACS 2002). pp.
  41--51. Springer US, Boston, MA (2002)

\bibitem{diasreducing}
Dias, S., Vieira, N.: Reducing the size of concept lattices: The {JBOS}
  approach. In: 7th International Conference on Concept Lattices and Their
  Applications (CLA 2010). {CEUR} Workshop Proceedings, vol.~672, pp. 80--91.
  CEUR-WS.org (2010)

\bibitem{Dua:2019}
Dua, D., Graff, C.: {UCI} machine learning repository (2017),
  \url{http://archive.ics.uci.edu/ml}

\bibitem{duffus}
Duffus, D., Rival, I.: Crowns in dismantlable partially ordered sets. In: 5th
  Hungarian Combinatorial Colloquium. vol.~I, pp. 271--292 (1978)

\bibitem{Durrschnabel2019}
D{\"{u}}rrschnabel, D., Hanika, T., Stumme, G.: Drawing order diagrams through
  two-dimension extension. CoRR  \textbf{abs/1906.06208} (2019)

\bibitem{dataset}
Dürrschnabel, D., Koyda, M., Stumme, G.: Attribute selection using
  contranominal scales [dataset] (Apr 2021). \doi{10.5281/zenodo.4945088}

\bibitem{fca-book}
Ganter, B., Wille, R.: Formal Concept Analysis - Mathematical Foundations.
  Springer (1999)

\bibitem{Hanika.2020}
Hanika, T., Hirth, J.: Knowledge cores in large formal contexts. CoRR
  \textbf{abs/2002.11776} (2020)

\bibitem{hanika2019relevant}
Hanika, T., Koyda, M., Stumme, G.: Relevant attributes in formal contexts. In:
  24th International Conference on Conceptual Structures, ({ICCS} 2019).
  Lecture Notes in Computer Science, vol. 11530, pp. 102--116. Springer (2019)

\bibitem{Hanika.2019}
Hanika, T., Marx, M., Stumme, G.: Discovering implicational knowledge in
  wikidata. In: 15th International Conference on Formal Concept
  Analysis({ICFCA} 2019). Lecture Notes in Computer Science, vol. 11511, pp.
  315--323. Springer (2019)

\bibitem{Ho.2018}
Ho, V.T., Stepanova, D., Gad{-}Elrab, M.H., Kharlamov, E., Weikum, G.: Rule
  learning from knowledge graphs guided by embedding models. In: 17th
  International Semantic Web Conference ({ISWC} 2018). Lecture Notes in
  Computer Science, vol. 11136, pp. 72--90. Springer (2018)

\bibitem{Karp72}
Karp, R.: Reducibility among combinatorial problems. In: Proceedings of a
  symposium on the Complexity of Computer Computations. pp. 85--103. The {IBM}
  Research Symposia Series, Plenum Press, New York (1972)

\bibitem{Koyda2021}
Koyda, M., Stumme, G.: Boolean substructures in formal concept analysis. CoRR
  \textbf{abs/2104.07159} (2021)

\bibitem{Kuitche2018}
Kuitch{\'{e}}, R., Temgoua, R., Kwuida, L.: A similarity measure to generalize
  attributes. In: 14th International Conference on Concept Lattices and Their
  Applications ({CLA} 2018). {CEUR} Workshop Proceedings, vol.~2123, pp.
  141--152. CEUR-WS.org (2018)

\bibitem{Kumar}
Kumar, C.: Knowledge discovery in data using formal concept analysis and random
  projections. International Journal of Applied Mathematics and Computer
  Science  \textbf{21}(4),  745--756 (2011)

\bibitem{kunegis2013konect}
Kunegis, J.: Konect: the koblenz network collection. In: Proceedings of the
  22nd international conference on world wide web. pp. 1343--1350 (2013)

\bibitem{kuzuetsov1990stability}
Kuznetsov, S.: Stability as an estimate of the degree of substantiation of
  hypotheses derivedon the basis of operational similarity.
  Nauchno-Tekhnicheskaya Informatsiya, Seriya 2  (1990)

\bibitem{Lozin2002}
Lozin, V.: On maximum induced matchings in bipartite graphs. Information
  Processing Letters  \textbf{81}(1),  7--11 (2002)

\bibitem{rowley1990pc}
Rowley, D.: Pc/beagle. Expert Systems  \textbf{7}(1),  58--62 (1990)

\bibitem{schlimmer1981mushroom}
Schlimmer, J.: Mushroom records drawn from the audubon society field guide to
  north american mushrooms. GH Lincoff (Pres), New York  (1981)

\bibitem{students}
Seshapanpu, J.: Students performance in exams (11 2018),
  \url{https://www.kaggle.com/spscientist/students-performance-in-exams}

\bibitem{STUMME2002189}
Stumme, G., Taouil, R., Bastide, Y., Pasquier, N., Lakhal, L.: Computing
  iceberg concept lattices with titanic. Data \& Knowledge Engineering
  \textbf{42}(2),  189 -- 222 (2002)

\bibitem{Wille1989}
Wille, R.: Lattices in Data Analysis: How to Draw Them with a Computer, pp.
  33--58. Springer Netherlands (1989)

\bibitem{Xiao2017}
Xiao, M., Tan, H.: Exact algorithms for maximum induced matching. Information
  and Computation  \textbf{256},  196--211 (2017)

\end{thebibliography}
\end{document}
